\newlength{\whilewidth}
\algnewcommand{\parState}[1]{\State%
  \parbox[t]{\dimexpr\linewidth-\algmargin}{\strut #1\strut}}
\newtheorem{theorem}{Theorem}
\newtheorem{Assumption}{Assumption}
\begin{document}
\title{Invariant Federated Learning for Edge Intelligence: Mitigating Heterogeneity and Asynchrony via Exit Strategy and Invariant Penalty}  
 \author{Ziruo Hao,
Zhenhua~Cui, 
Tao~Yang,~\IEEEmembership{Member,~IEEE,}
\\
Xiaofeng~Wu,
Hui~Feng,~\IEEEmembership{Member,~IEEE,}
and~Bo~Hu,~\IEEEmembership{Member,~IEEE}      

\thanks{Z. Hao, Z. Cui, T. Yang (Corresponding author), X. Wu, H. Feng and B. Hu are with the Department of
Electronic Engineering, School of Information Science and Technology,
Fudan University, Shanghai 200438, China (e-mail: zrhao22@m.fudan.edu.cn, czh773742344@sina.com, \{taoyang, xiaofeng\_wu, hfeng, bohu\}@fudan.edu.cn). H. Feng and B. Hu are also with the the Shanghai Institute of Intelligent Electronics and Systems, Shanghai 200433, China.
}
}

\maketitle

\begin{abstract}
This paper provides an invariant federated learning system for resource-constrained edge intelligence. This framework can mitigate the impact of heterogeneity and asynchrony via exit strategy and invariant penalty. We introduce parameter orthogonality into edge intelligence to measure the contribution or impact of heterogeneous and asynchronous clients. It is proved in this paper that the exit of abnormal edge clients can guarantee the effect of the model on most clients. Meanwhile, to ensure the models' performance on exited abnormal clients and those who lack training resources, we propose Federated Learning with Invariant Penalty for Generalization (FedIPG) by constructing the approximate orthogonality of the invariant parameters and the heterogeneous parameters. Theoretical proof shows that FedIPG reduces the Out-Of-Distribution prediction loss without increasing the communication burden. The performance of FedIPG combined with an exit strategy is tested empirically in multiple scales using four datasets. It shows our system can enhance In-Distribution performance and outperform the state-of-the-art algorithm in Out-Of-Distribution generalization while maintaining model convergence. Additionally, the results of the visual experiment prove that FedIPG contains preliminary causality in terms of ignoring confounding features.

\end{abstract}

\begin{IEEEkeywords}
Invariant learning, Asynchronous Federated Learning, Parameter Orthogonality, Heterogeneous environment, Out-of-Distribution Generalization, Edge intelligence
\end{IEEEkeywords}

\begin{figure*}[!t]
\centering
\includegraphics[width=7in]{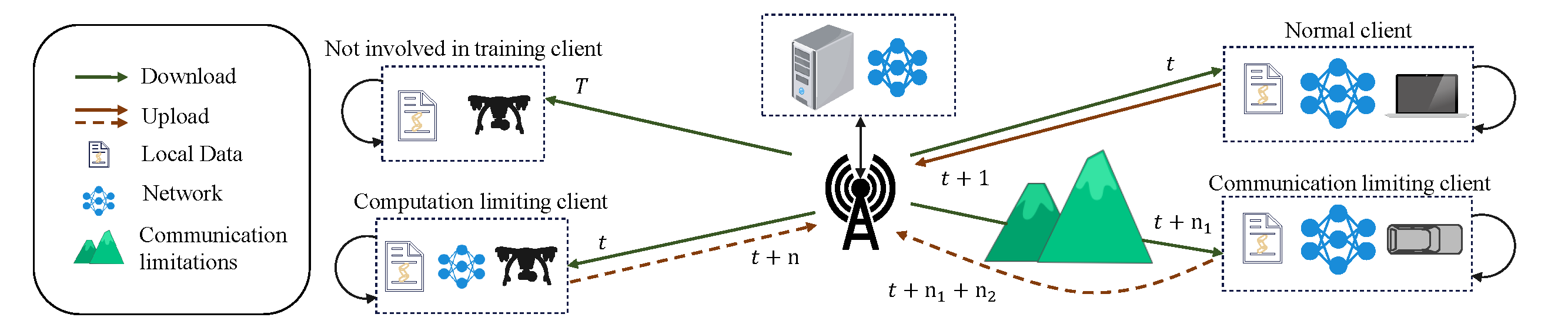}
\captionsetup{font={small}}
\caption{Edge intelligence scenarios with diverse computing and communication capabilities}
\label{edge}
\end{figure*} 
\section{Introduction}
\IEEEPARstart{A}{s} smart devices become increasingly lightweight and the demand for terminal task computing grows, the intelligence of edge devices has emerged as a prominent trend. With large volumes of data being collected and stored locally on these devices, there is an escalating need for a distributed learning framework to address communication constraints while ensuring robust privacy protection. In response to these challenges, federated learning (FL) has emerged as a key area of research.FL is a distributed computing paradigm designed to preserve privacy while enabling collaborative learning. The central tenet of FL is that data remains on local devices, with minimal interaction with a central server or other clients\cite{LiLi2020,Chen2021}. 

Although previous research has shown that, under ideal conditions, FL can achieve performance comparable to centralized learning, in edge intelligence contexts these ideal conditions are challenging to achieve, as shown in Fig.\ref{edge}. First, data collected from diverse sources typically exhibit significant heterogeneity and environmental interference. Second, in sophisticated scenarios involving edge intelligence, there exists a situation where communication and computational capabilities are unevenly distributed, ensuring synchronization within edge intelligence systems poses a considerable challenge. Both data heterogeneity and asynchrony can substantially degrade the performance of FL in such environments\cite{Xiaoxiao2021,Qu2022}. While existing works have focused on mitigating the negative impacts of heterogeneous and asynchronous clients through improved aggregation schemes, our approach replenish a new perspective on modeling and analysis.

 We propose an innovative framework where abnormal clients detected during the pretraining phase can autonomously withdraw from the training process, thereby alleviating the impacts of system heterogeneity and communication asynchrony. The trained model will be only employed on those aberrant clients after training. Our previous work has demonstrated that when latency is high, it is often more advantageous to allow computationally inefficient clients with poor communication environments to withdraw from the federated training process\cite{Impact}. In this article, we decompose the local optimal parameters of clients into two parameter vectors: the globally practical invariant component and the locally effective environmental heterogeneous component\cite{DFL}. The locally environmental nuisance parameters are required to be orthogonal to the parameter of invariant interest\cite{orthogonality}. We theoretically prove that as the size of the heterogeneous component increases and delays become more pronounced, anomalous clients are likely to negatively impact global aggregation in FedAvg\cite{FedAvg}. The proposed exit strategy thus protects the model from disturbances caused by such clients. Provided that the model can generalize well, their withdrawal can improve performance for the majority of regular clients without compromising the utility of the model for the few clients that exit.

 Although the exit strategy enhances performance for In-Distribution (ID) clients, it raises concerns regarding Out-Of-Distribution (OOD) generalization for exited clients and clients who lacking sufficient data or computational resources but still requiring robust OOD inference. This challenge has been addressed in centralized learning ~\cite{Ramé2023,Lee2023,OOD}, with invariant learning emerging as a prominent solution~\cite{InvariantLearning}. Invariant learning decouples data features into invariant (stable and effective across conditions) and environmental components, where environmental features can introduce noise or instability~\cite{Jonas2016}. For instance, in wildfire detection, flames and temperature represent invariant features, whereas the background (e.g., smoke or fog) can confound the classification~\cite{Mahajan2021}. Although several centralized schemes have leveraged this idea, applying it within FL remains limited~\cite{Shi2022,Ghosh2019}.

Inspired by Invariant Risk Minimization\cite{IRM} and parameter orthogonality mentioned earlier, we propose FedIPG to achieve OOD generalization in edge intelligence scenarios. FedIPG employs gradients to approximate environmental heterogeneous components and constructs an invariance penalty term to suppress the influence of environment-specific variations, thereby facilitating the learning of globally invariant parameters.This process, firmly grounded in parameter orthogonality, ensures that the environmental component does not undermine the estimation of invariant features, resulting in a robust model even for clients that exit the training process.By combining the exit strategy with FedIPG, our framework efficiently allocates communication resources to clients with stronger computational capabilities, leading to a globally invariant model that performs well on both ID and OOD data while safeguarding against fairness concerns. The key contributions and innovations of this paper are as follows: 
\begin{itemize}
\item We have theoretically demonstrated that when the total amount of training data is relatively sufficient, a heterogeneous or asynchronous client can be detrimental rather than beneficial to global aggregation. The exit strategy can optimize the model's performance on data from normal clients.

\item We introduce an invariant learning paradigm into the FL framework and propose a novel algorithm, \textbf{Fed}erated Learning with \textbf{I}nvariant \textbf{P}enalty for \textbf{G}eneralization (FedIPG), which enhances OOD generalization. Our approach is built upon the principle of parameter orthogonality, ensuring effective disentanglement of invariant and heterogeneous components.

\item We empirically demonstrate that FedIPG combined with the exit strategy improves performance on both ID and OOD data, outperforming baseline algorithms. Moreover, FedIPG shows promising resistance to confounding in causal learning scenarios.

\end{itemize}
\section{Related Work}
In the field of federated learning, numerous studies have addressed the challenges of non-IID data distributions~\cite{Shi2022} and asynchronous updates~\cite{you2022triple, wang2022gradient}. To mitigate these issues, several works focus on selecting a specific subset of clients to reduce the negative effects of outdated gradients~\cite{cui2023data, zhu2022online}. However, such methods typically require real-time monitoring of clients' channel conditions, a task that is particularly challenging in edge-intelligent environments, where the channel state is highly volatile~\cite{zhang2021client}. Moreover, no prior work has provided a theoretical quantification of the potential harm caused by the participation of heterogeneous and asynchronous clients using a parameter orthogonal decomposition approach, a principle that is critical for ensuring the asymptotic independence of the estimates for the invariance parameter and the environmental heterogeneity parameter. This orthogonality minimizes interference between these two components, leading to more stable and robust statistical inferences~\cite{orthogonality}. 

In centralized learning, methods for out-of-distribution (OOD) generalization often rely on environment labels to guide feature or representation learning. For example, Invariant Causal Prediction (ICP)~\cite{Pfister2019} leverages invariance properties to infer causal structures~\cite{Jonas2016}, though its effectiveness is highly dependent on the availability and quality of environment information~\cite{Liu2021}. Other approaches, such as the Causal Semantic Generative model (CSG)~\cite{LiuChang2021} and Causal Invariant Transformation (CIT)~\cite{Wang2022}, respectively extract semantic factors to bound generalization errors and modify non-causal features while preserving causal ones. Methods like Risk Extrapolation (REx) and its variant, MM-REx, trade off robustness to distribution shifts with stability against covariate variations~\cite{Krueger2021}, while Invariant Risk Minimization (IRM)~\cite{IRM} proposes to estimate nonlinear invariant predictors from multiple environments. Despite the promise of IRM, its non-convex bi-level formulation poses significant optimization challenges, and its performance may degrade under certain scenarios compared to empirical risk minimization (ERM)~\cite{Krikamol2013, Elan2020}. Importantly, these centralized methods do not account for the inherent data isolation present in FL systems. 

In the federated domain, recent works such as FedADG~\cite{Zhang2021} and FedSR~\cite{Nguyen2022} have focused on domain generalization through distribution alignment—usually at the cost of increased modeling complexity and additional computational overhead. Meanwhile, FedGen~\cite{FedGen} employs distributed masking functions with invariant learning to achieve improvements in efficiency, accuracy, and generalization for sequential data, and FedIIR~\cite{FedIIR} implicitly learns invariant relationships through early-round gradient exchange. However, the latter incurs additional communication overhead, rendering it less suitable for edge-intelligent scenarios with constrained resources. In contrast to these works, our study uniquely quantifies the detrimental effects of including heterogeneous and asynchronous clients through a parameter orthogonal decomposition framework and investigates improving the generalization of the model on clients not involved in the training by using parameter orthogonality without any additional communication in formal training process.

\begin{figure*}[!t]
\centering
\includegraphics[width=7in]{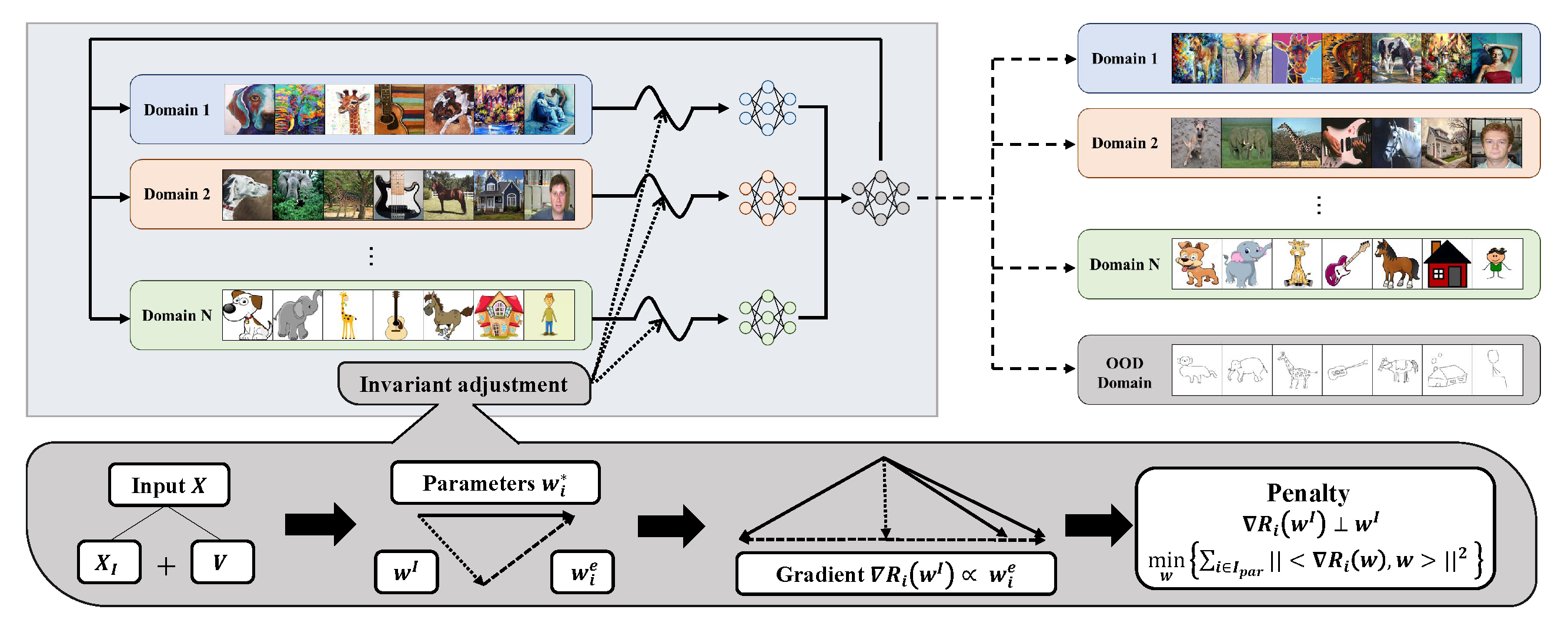}
\captionsetup{font={small}}
\caption{The process of federated learning in edge intelligence scenarios with guaranteed OOD generalization}
\label{training}
\end{figure*} 
\section{The System Model of Invariant Federated Learning }
\subsection{The Federated System Model with Exit Strategy}
\begin{algorithm}[t]
  \captionsetup{font={small}}
  \caption{\emph{Federated Learning with Invariant Penalty for Generalization with exit strategy}}
  \label{AFL algorithm}
  \begin{algorithmic}[1]
  \linespread{1}\selectfont
  \State \textbf{Pretraining:} Identify $\mathcal{I}_{ID} \in \mathcal{I}_{all}$
  \State \textbf{Initialization:} 
  \State Set $t=1$
  \State The center sends the initialized model parameters $w^1$ to all nodes
  \State \textbf{Center Side:}
  \For {$t < T+1$}
  \State \textbf{Aggregation}: $w^{t+1} = \sum_{i\in \mathcal{I}_{ID}} \frac{n_i}{\sum_{j\in \mathcal{I}_{ID}}n_j}w^{t}_i$.
  \State \textbf{Distribute} $w^{t+1}$ to the client
  \State Set $t = t+1$
  \EndFor
  \State \textbf{Client Side:}
  \For {$t < T+1$}
  \For {all clients: $i={1,2,..,N}$}
  \State \textbf{Receive} $w^t$ 
  \State \textbf{Compute} $w^{t}_i = w^{t} - \eta\nabla f_i(w^{t})$
  \State \textbf{Upload} $w^{t}_i$ to the server
  \EndFor
  \EndFor
  \For {$t = T+1$}
  \For {all clients: $i\in \mathcal{I}_{all} $}
  \State \textbf{Apply} $w^{t}_i$ to prediction tasks
  \EndFor
  \EndFor
  \end{algorithmic}

\end{algorithm}

In this paper, we consider an edge intelligent scenario where a total of $N_{all}=|\mathcal{I}_{all}|$ clients participate in training a unified task within an edge computing system, where $\mathcal{I}_{all}$ is collection of all clients, which is possibly infinite. Each client is given a dataset $\mathcal{D}_i =\{(x_k,y_k)\}^{n_i}_{k=1}, i=1,2...,N_{all}$, where $x_k\in \mathbb{R}^d$ is the input, $y\in\mathbb{R}^1$ is the label, and $n_i=|\mathcal{D}_i|$ denotes the number of samples in dataset $D_i$ located at client $i$.
In addition, the dataset in each client is assumed to have local heterogeneous features and global invariant properties based on the principle described in the last subsection, which means the distribution of any $\mathcal{D}_i$ is Non-IID and different from the distribution of $\sum_{i\in \mathcal{I}_{all}}\mathcal{D}_i$\cite{Alexandre2022}.
In this paper, according to the standard definitions in Federated Learning scheme, the empirical risk $R_i(w)$ based on training data of clients is defined as
\begin{equation}
    R_i(w)=\frac{1}{n_i}\sum_{x,y\in \mathcal{D}_i}\mathcal{L}(x,y,w),
\end{equation}
where $\mathcal{L}(x,y,w)$ is the empirical risk of a sample pair$(x,y)$ from local data $\mathcal{D}_i$ made with model parameters $w$\cite{Yiqiang2020}. The formula of $\mathcal{L}$ is kept consistent across localities, e.g. cross-entropy loss for classification tasks.

During each iteration, clients receive the global parameter $w^{t-1}$ transmitted from the central server. Based on $w_i^{(t,0)}=w^{t-1}$, the local entities optimize the parameters using gradient descent on their local data. Each client in $\mathcal{I}_{all}$ compute
\begin{align}
    w_i^{t+1}=w^t-\eta\nabla f_i(w^t)
    \label{update}
\end{align} 
where $\eta$ is the learning rate and $f_i(\cdot)$ represents a designed local loss function. Once the optimization process concludes, each local entity uploads its local parameter $w^{t}_i$ to the central server. 

Under ideal conditions, the central server aggregate the local gradients and applies the update $w^{t+1} = \sum_{i \in {\mathcal{I}_{all}}}\frac{n_i}{n}w_i^{t}$, where $n=\sum_{i \in \mathcal{I}_{all}}n_i$.
The global parameters $w^{t+1}$ is obtained and it will be distributed to each local entity for further processing and utilization. However, due to the objective constraints of clients' communication and computational capabilities, clients cannot fully synchronize their updates, leading to asynchronous errors in the global aggregated parameters caused by delays $\tau_i$ for client $i$. Thus, the aggregated parameter turns into $w^{t+1} = \sum_{i \in {\mathcal{I}_{all}}}\frac{n_i}{n}w_i^{t-\tau_i}$.
Despite the abundance of research focusing on asynchronous errors, these errors have an irremovable impact on model aggregation during the federated learning process. This impact may hinder learning efficiency and the final performance of the model. Therefore, in this paper, we propose an exit mechanism that selectively includes only clients in $\mathcal{I}_{ID}$ with sufficient computational, data, and communication resources in the learning process. Based on this premise, the paradigm for parameter aggregation takes the following form.
\begin{align}
    w^{t+1} = \sum_{i \in {\mathcal{I}_{ID}}}\frac{n_i}{n}w_i^{t}. 
\end{align}
For clients that are included in $\mathcal{I}_{all}$ but excluded from $\mathcal{I}_{ID}$, we refer to them as OOD clients in $\mathcal{I}_{OOD}$. While clients in $\mathcal{I}_{OOD}$ do not participate in training, our model ensures post-training performance on their datasets. The loss function balances accuracy on the training set with the invariance of parameters across environments. Specifically, we introduce an invariance penalty term to align parameters, ensuring they are effective on both par and ood datasets. Specifically, we design an $f_i(w)$ such that the following condition holds
\begin{equation}
    w^I = \min_w \sum_{i\in \mathcal{I}_{ID}}a_i f_i(w) \to \min_w \sum_{i\in \mathcal{I}_{all}}b_i R_i(w),
\end{equation}
Here, $a_i$ and $b_i$ represent the weights of each client in the global loss function, which are designed based on various evaluation metrics, including data volume and data value. To ensure the straightforward applicability of our algorithm, we employ data volume as a surrogate measure of client importance. It is important to note that approaches relying on data value estimation are not mutually exclusive with our design. Thus, the FL algorithm we consider is applicable to any finite-sum objective of the form
\begin{equation}
    f(w)=\sum_{i\in \mathcal{I}_{ID}}\frac{n_i}{n}f_i(w),
\end{equation}
where $f_i(w)$ is the loss function with penalty based on $R_i(w)$. In the next subsection, we will elaborate on the rationale behind the design of the penalty term and present the complete invariant federated learning algorithm.

\subsection{The Profile of the Parameter Orthogonality the Invariant Schemes}

In the traditional federated learning framework, each client focuses solely on minimizing the empirical risk, guiding the parameters towards local optimal values. At the server side, the aggregation of information from various clients yields a federated global parameter that approximates the optimal parameter obtained in centralized training. In the context of edge intelligence, we select clients with stable computational capabilities to participate in federated training. If the penalty term is computed based solely on local data, parameters, or gradients,  the penalty term would not increase communication costs which is good for edge computing.

Our objective is to steer the parameters towards emphasizing invariant features that maintain a causal relationship with the task across diverse client environments, thereby facilitating generalization to OOD data, as shown in figure \ref{training}. To achieve OOD generalization, the model needs to decouple the data $X$ into two independent features: the invariant features $X_I$ and the heterogeneous features $V$. Inspired by previous researchers \cite{orthogonalvectors}, we decompose the optimal parameter vector of each client into two orthogonal vectors, $w^I$ and $w^e_i$. Here, $w^I$ corresponds to $X_I$, which remains consistent across all clients, while $w^e_i$ corresponds to $V$, which varies from client to client. The orthogonality between $w^I$ and $w^e_i$ reflects the independence between $X_I$ and $V$. According to parameters orthogonality theorem\cite{orthogonality}, we define $w^I$ and $w^e_i$ as
\begin{align}
    E(\frac{\partial R_{i}(w)}{\partial w^e_i}\frac{\partial R_{i}(w)}{\partial w^I};w)=0, 
\end{align}
where $R_i(w^I+w^e_i)=0$. Since $-\nabla R_i(w)$ can be interpreted as pointing toward the optimal parameter when the current parameter is $w$, i.e., $-\nabla R_i(w) \approx w^*_i - w$. So, we can rewrite the relationship between $w^I$ and $w^e_i$ as: 
\begin{align}
   \forall i :\ E(< -\nabla R_i(w^I),w^I > ; w=w^I+w^e_i)=0, 
\end{align}

If $w = w^I$, then $-\nabla R_i(w^I) \perp w^I$. Consequently, our goal shifts to training a invariant parameter $w^I$ within the federated system. To identify the global invariant parameter, we need to ensure that the following equality holds

\begin{align}
    w^I\in \{w|\|<\nabla R_i(w),w>\|^2=0,\forall\ i \in\mathcal{I}_{ID} \}
\end{align}
%we used the value of the dot product of the gradient and parameter as the penalty to adjust the parameters. 
Based on the above reasoning, the loss function of client $i$ in the FedIPG system is defined as:
\begin{equation}
f_i(w) = R_i(w)+\lambda \|<\nabla R_i(w),w>\|^2.
\end{equation}
Positive hyper-parameter $\lambda$ denotes the strength of invariant regularization. The quest of regularization $ \|<\nabla R_i(w),w> \|^2$ is to find invariant features. 

Significantly, each client's update still optimizes towards its local optimal parameter. Due to the invariance penalty term altering the distribution of the client loss functions, the clients exhibit a smoother landscape at the invariance optimal parameter point, where $<\nabla R(w), w> = 0$. After weighted summation at the server, the global optimal parameter $w^*$ can be considered the midpoint of the client optimal parameters $w^*_i$. Guided by the penalty term, the summation yields a new invariance optimal parameter $w^I$. $w^I$ exhibits relatively smooth gradients across any client, indicating a focus on invariance features while potentially neglecting rapidly changing environmental information.

Algorithm 1 shows the pseudo-code of FedIPG. In conclusion, each client locally takes a step of gradient descent based on the current global parameter and the invariant loss function calculated with local data. The server then takes a weighted average of the local parameter as the global parameter.

\begin{table}[t]
\centering
\captionof{table}{Summary of Notations}
\label{notations}
\renewcommand\arraystretch{1.5}
\begin{tabular}{c|l}
 \hline
    \textbf{Notation}                                             & \textbf{Definition}\\ \hline
    $w^t$  
            & The global parameters at $t^{th}$ training iteration\\ \hline
    $w^t_i$                      
            & The local parameters of $clinet_i$ at $t^{th}$ training iteration   \\ \hline
    $w^*$
            & The global optimal training parameters \\ \hline
    $w^I$
            & The global optimal invariant parameters \\ \hline
    $w^*_i$
            & The local optimal parameters of $clinet_i$ \\ \hline
    $\eta$                             
            & Learning rate \\ \hline
    $\lambda$
            & Hyper-parameters for the invariance penalty term \\ \hline        
    $T$
            & The total number of communication rounds\\ \hline
    $\phi$                         
            & The parameter for bounded data heterogeneity assumption\\ \hline
    $L$                         
            & The parameter for smoothness assumption\\ \hline
    $\mu$
            & The parameter for convexity assumption \\ \hline
    $B$
            & The parameter for compactness assumption\\ \hline
    $G$
            & The parameter for bounded gradient assumption\\ \hline
    $\rho$
            & The parameter for second-order gradient assumption\\ \hline
    
\end{tabular}
\end{table}
\section{The basis of Aberrant Clients Exit Strategy }
\subsection{Theoretical Measure for Clients' Contribution}
In this section, we define the contribution of a client's participation in a round of aggregation in FedAvg. When there is a probability that a client's contribution is negative, it implies that the client's participation may have a detrimental effect on the model's performance. By orthogonally decomposing parameters, the contributions of heterogeneous and asynchronous clients are quantified.

Firstly, we quantify the contribution of each client to the global aggregation. To facilitate the derivation of reliable theoretical results, we introduce several assumptions regarding the local loss functions.
\begin{Assumption}(Smoothness): \label{smoothness}
\\
The local loss function $R_i,\,i \in \{1,...,N\}$ is $L$-smooth if $\forall w_1,w_2$, the following inequality holds
\begin{equation}
R_i(w_1)-R_i(w_2) \le <\nabla{R_i(w_2)},{w_1-w_2}>+\frac{L}{2}{\|w_1-w_2\|}^2.
\end{equation}
\end{Assumption}
The prevalence and commonality of this smoothness assumption in the analysis of convergence in Federated Learning are well-established, as evident in studies\cite{chen2020asynchronous}. Additionally, we assume the local loss function is convex as discussed in \cite{fang2022convex}.
\begin{Assumption} (Convexity):\label{Convexity}
\\
The local loss function $R_i,\,i \in \{1,...,N\}$ is $\mu$-convex if $\forall w_1,w_2$, the following inequality holds 
\begin{equation}
R_i(w_1) - R_i(w_2) \geq <\nabla{R_i(w_2)},w_1-w_2> + \frac{\mu}{2} {\|w_1-w_2\|}^2.
\end{equation}
\end{Assumption}
Based on Assumption \ref{smoothness} and Assumption \ref{Convexity}, we have\cite{submodular}:
\begin{align}
R(w^T)-R(w^*)\leq \prod^{T-1}_{t=0}(1-2\mu \sum_{i \in \mathcal{I}_{ID}}C_i^t)(R(w^0)-R(w^*))
\end{align}
where 
\begin{align}
 C^t_i=\eta \frac{n_i}{n}\frac{ <\nabla R(w^t), \nabla R_i(w^{t-\tau_i^t})> - \frac{L}{2} \|\nabla R_i(w^{t-\tau_i^t})\|^2 }{\|\nabla R(w^t)\|^2}.
\end{align}
Here, \( C_i^t \) represents the contribution of client \( i \)'s participation in the aggregation at round \( t \) on the training process, where \( \tau_i^t \) denotes the asynchronous delay of client \( i \) at round \( t \). Client \( i \)'s participation in round \( t \) is beneficial to the aggregation if and only if \( C_i^t > 0 \). Next, we will conduct a detailed analysis of the variation of \(P_i^t\) under the influence of \(w_i^e\) and \(\tau\), aiming to elucidate the impact of these factors on system performance.

\subsection{Contribution Analysis for Aberrant clients}
First, we analyze the effect of strongly heterogeneous clients when $\tau_i^t = 0$. In common federated learning research, the distribution discrepancy assumption is often adopted, which characterizes the differences between data distributions\cite{Dissimilarity}.
\begin{Assumption}(Local Dissimilarity): \label{Bounded Local Dissimilarity}
    \begin{equation}
    U_i^2\|\nabla R(w)\|^2\leq\|\nabla R_i(w)\|^2\leq V_i^2\|\nabla R(w)\|^2
    \end{equation}
\end{Assumption}

Here, we assume the presence of a highly heterogeneous client \( i \), while the data of other clients can be approximated as independent and identically distributed (IID), i.e., \(U_i \approx V_i \approx 1\).  Under this assumption, the upper bound of \(C_i^t\) can be expressed as:
\begin{align}
	C^t_i=\eta \frac{n_i}{n} (U_i\cos{\theta_i^t} - \frac{L}{2} V_i^2)\label{valueofclient}
\end{align}
where \(\theta_i^t\) is the angle between two gradient vectors. namely $\nabla R(w^t)$ and $\nabla R_i(w^t)$, when the learning rate satisfy \(\eta \leq \frac{2n}{n_{min} L}\), in which $n_{min}$ is the data volume of client with smallest amount of data. Clearly, the magnitude of \(C_i^t\) is highly correlated with \(\theta_i^t\).

When this external heterogeneous client \( i \) participates in training, whose optimal parameter is \(w^*_i=w_I+w^e_i\), we obtain the following conclusion:
\begin{theorem} (The Contribution Analysis for Heterogeneous client)
\\
For a optimization started from the origin of coordinates, while \(w^t=\kappa \frac{w^I}{\|w^I\|} + \nu \frac{w^e_i}{\|w^e_i\|}\), $\kappa\in (0,1)$ and $\nu \in (-\epsilon,\epsilon)$, the contribution of client \( i \) in iteration $t$ can be represent as:
\begin{align}
	C^t_i=\eta \frac{n_i}{n} (\frac{U_i((1-\kappa)^2-\nu(\|w^e_i\|-\nu))}{\sqrt{(1-\kappa)^2+\nu^2}\sqrt{(1-\kappa)^2+(\|w^e_i\|-\nu)^2}} - \frac{L}{2} V_i^2)\label{heterclientvalue}
\end{align}
\begin{IEEEproof}
See Appendix-A.
\end{IEEEproof}
\end{theorem}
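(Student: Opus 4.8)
The plan is to reduce the statement to a purely geometric computation of $\cos\theta_i^t$, since equation (\ref{valueofclient}) already expresses $C_i^t$ in terms of this angle together with the dissimilarity constants $U_i$ and $V_i$. The only genuinely new ingredient is the cosine of the angle between the global gradient $\nabla R(w^t)$ and the client gradient $\nabla R_i(w^t)$ evaluated at the prescribed iterate, so the whole argument is essentially trigonometric.

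First I would exploit the orthogonality $w^I \perp w^e_i$ established in Section III-B to build an orthonormal frame: set $\hat{e}_1 = w^I/\|w^I\|$ and $\hat{e}_2 = w^e_i/\|w^e_i\|$, and work in the plane they span, in which $w^t$, $w^I$, and $w^*_i = w^I + w^e_i$ all lie by construction. Normalizing so that the invariant optimum sits at unit distance along $\hat{e}_1$, the iterate reads $w^t = (\kappa,\nu)$ in these coordinates. I then invoke the first-order approximation $-\nabla R_i(w)\approx w^*_i - w$ introduced earlier: the (approximately IID) majority pull toward the shared invariant target $w^I = (1,0)$, so $-\nabla R(w^t)=(1-\kappa,-\nu)$, while the heterogeneous client pulls toward $w^*_i=(1,\|w^e_i\|)$, so $-\nabla R_i(w^t)=(1-\kappa,\|w^e_i\|-\nu)$.

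Next I would read off the cosine. Because negating both vectors preserves the angle, $\cos\theta_i^t$ is the normalized inner product of these two descent directions, i.e.
\begin{align}
\cos\theta_i^t = \frac{(1-\kappa)^2 - \nu(\|w^e_i\|-\nu)}{\sqrt{(1-\kappa)^2+\nu^2}\,\sqrt{(1-\kappa)^2+(\|w^e_i\|-\nu)^2}}.
\end{align}
The observation that keeps the algebra clean is that both optima share the same invariant coordinate, so each descent direction carries the identical $\hat{e}_1$-component $1-\kappa$ and the two differ only along the heterogeneous axis $\hat{e}_2$. Substituting this $\cos\theta_i^t$ into (\ref{valueofclient}) reproduces (\ref{heterclientvalue}) verbatim, with the factor $U_i$ in front of the cosine and the $\frac{L}{2}V_i^2$ term untouched.

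The step I expect to require the most care is justifying the linear relation $-\nabla R_i(w)\approx w^*_i-w$ as an equality here, together with fixing the normalization $\|w^I\|=1$ that produces the $(1-\kappa)$ factors; both hold cleanly when each local landscape is treated as an isotropic quadratic about its optimum (Hessian proportional to the identity), which is consistent with the $L$-smoothness and $\mu$-convexity assumptions. A secondary point is to confirm that, in the regime $U_i\approx V_i\approx 1$ assumed immediately before the theorem, the ratio $\|\nabla R_i(w^t)\|/\|\nabla R(w^t)\|$ enters exactly through $U_i$ as written in (\ref{valueofclient}), so that the dissimilarity constants pass through the geometric substitution unchanged.
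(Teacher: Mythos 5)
Your proposal is correct and matches the paper's intended argument: you compute $\cos\theta_i^t$ from the descent directions $-\nabla R(w^t)\approx w^I-w^t$ and $-\nabla R_i(w^t)\approx w^*_i-w^t$ in the orthonormal frame spanned by $w^I/\|w^I\|$ and $w^e_i/\|w^e_i\|$ (with $\|w^I\|$ normalized to $1$), then substitute into (\ref{valueofclient}), which is exactly the route the paper sets up in the text preceding the theorem. You also correctly flag the two tacit steps the paper itself glosses over --- treating the approximation $-\nabla R_i(w)\approx w^*_i-w$ as an equality and the dissimilarity constants $U_i,V_i$ passing through unchanged --- so there is nothing to add.
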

Under the condition of sufficient data volume, the impact of a single client on global aggregation can be considered minimal, which means \(\epsilon < \|w^e_i\|\). When the magnitude of \(\|w^I\|\) is fixed, the denominator of (\ref{heterclientvalue}) increases as \(\|w^e_i\|\) grows. For \(\nu \leq 0\), the numerator decreases with the increase of \(\|w^e_i\|\). For \(\nu \geq 0\), the numerator increases with \(\|w^e_i\|\), but at a slower rate than the denominator. Clearly, as data heterogeneity increases, the value of client participation in global aggregation diminishes. Clients with strong heterogeneity are more likely to have a negative impact.

Next, we analyze the impact of  delay $\tau$ to $C^t_i$. To simplify the problem, we assume all clients include highly homogeneous data, which means $\nabla R_i(w)$ is equal to  $\nabla R(w)$. The upper bound of \(C_i^t\) can be expressed as:
\begin{align}
	C^t_i&=\eta \frac{n_i}{n}\frac{ <\nabla R(w^t), \nabla R(w^{t-\tau_i^t})> - \frac{L}{2} \|\nabla R_i(w^{t-\tau_i^t})\|^2 }{\|\nabla R(w^t)\|^2} \notag \\
    &=\eta \frac{n_i}{n}\frac{ \|\nabla R(w^{t-\tau_i^t})\|}{\|\nabla R(w^t)\|}- \frac{L}{2} \frac{\|\nabla R_i(w^{t-\tau_i^t})\|^2 }{\|\nabla R(w^t)\|^2}
\end{align}
When \( \tau_i^t = 0 \), \( C_i^t > 0 \), which implies \( L < 2 \), a condition that typically holds under normal circumstances. Notably, \( C_i^t \) degenerates into a function of \( \frac{ \|\nabla R(w^{t-\tau_i^t})\|}{\|\nabla R(w^t)\|} \). We next analyze the trend of \( \frac{ \|\nabla R(w^{t-\tau_i^t})\|}{\|\nabla R(w^t)\|} \) as \(\tau\) changes, which indirectly reflects the variation in client \(C^t_i\). Most importantly, we examine the negativity of \(C^t_i\) when \(\tau\) is large. The squared norm of the gradients is uniformly bounded\cite{Yuchen2012}.
\begin{Assumption}
    \label{Bound}
    (Bounded gradient)\\
Each local loss function $ R_i$ , $i \in \mathcal{I}_{ID}$ is differentiable and there exists a constant $G\geq 0$ such that 
    \begin{align}
        \|\nabla R_i (w)\|^2 \leq G^2 .
    \end{align}
    It can be obtained from the mean inequality that
    \begin{align}
        \nabla R_i (w) \nabla R_i (w)^\top \preceq G^2 I.
    \end{align}
\end{Assumption}
Based on smoothness assumption and bounded gradient assumption, we obtain the following conclusion:
\begin{theorem} (The Contribution Analysis for Asynchronous client)
\\
For and certain parameters $w$, the upper bound of \( \frac{ \|\nabla R(w^{t-\tau})\|}{\|\nabla R(w^t)\|^2} \) related to a delay $\tau$  can be limited by following inequality
\begin{align}
	\frac{ \|\nabla R(w^{t-\tau})\|}{\|\nabla R(w^t)\|} \leq 1+\frac{L G \tau}{\|\nabla R(w^t)\|}
\end{align}
\begin{IEEEproof}
See Appendix-A.
\end{IEEEproof}
\end{theorem}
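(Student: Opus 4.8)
The plan is to control the numerator $\|\nabla R(w^{t-\tau})\|$ by comparing it directly against the denominator $\|\nabla R(w^t)\|$ via the triangle inequality, and then to absorb the residual into a term that grows linearly in the delay $\tau$. Concretely, I would first write
\begin{align}
\|\nabla R(w^{t-\tau})\| \le \|\nabla R(w^t)\| + \|\nabla R(w^{t-\tau}) - \nabla R(w^t)\|,
\end{align}
so that the entire problem reduces to estimating the accumulated gradient drift $\|\nabla R(w^{t-\tau}) - \nabla R(w^t)\|$ over the $\tau$ stale iterations.

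For the gradient drift I would invoke smoothness in its Lipschitz-gradient form. Although Assumption \ref{smoothness} is phrased as the descent-lemma inequality, for a convex function this form is in fact equivalent to $L$-Lipschitzness of the gradient, so combining it with Assumption \ref{Convexity} licenses the bound $\|\nabla R(w_1) - \nabla R(w_2)\| \le L\|w_1 - w_2\|$. Applying this with $w_1 = w^{t-\tau}$ and $w_2 = w^t$ yields $\|\nabla R(w^{t-\tau}) - \nabla R(w^t)\| \le L\|w^{t-\tau} - w^t\|$, reducing the task to bounding the parameter displacement between the stale iterate and the current one.

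I would then bound that displacement by telescoping the update rule. Since this theorem specializes to the homogeneous regime where $\nabla R_i = \nabla R$, each round satisfies $w^{s+1} - w^s = -\eta \nabla R(w^s)$, and summing over $s = t-\tau,\dots,t-1$ gives
\begin{align}
\|w^t - w^{t-\tau}\| \le \sum_{s=t-\tau}^{t-1}\|w^{s+1}-w^s\| \le \tau G,
\end{align}
where the final inequality uses the bounded-gradient Assumption \ref{Bound}, $\|\nabla R(w^s)\| \le G$, under the convention that the step size is normalized so each increment is at most $G$. Substituting back and dividing through by $\|\nabla R(w^t)\|$ delivers the claimed estimate $\frac{\|\nabla R(w^{t-\tau})\|}{\|\nabla R(w^t)\|} \le 1 + \frac{L G \tau}{\|\nabla R(w^t)\|}$.

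The step I expect to be the main obstacle is the passage from the one-sided inequality in Assumption \ref{smoothness} to the two-sided Lipschitz-gradient bound, since the descent-lemma form alone does not imply gradient Lipschitzness without additional structure; I would resolve this by explicitly appealing to the convexity in Assumption \ref{Convexity}, which makes the two conditions equivalent, or alternatively by reading Assumption \ref{smoothness} as the operational definition of $L$-smoothness. A secondary subtlety is the handling of the learning rate in the displacement bound: the clean factor $G\tau$ in the statement presupposes either $\eta \le 1$ or a convention folding $\eta$ into each increment, and I would state this normalization explicitly so that the cumulative drift $L G \tau$ matches the theorem exactly.
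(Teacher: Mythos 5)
Your proof is correct and takes essentially the same route as the paper: triangle inequality on the stale gradient, the $L$-Lipschitz-gradient consequence of Assumption \ref{smoothness} (which, as you note, follows from the descent-lemma form under the convexity of Assumption \ref{Convexity}), and a telescoped displacement bound $\|w^t - w^{t-\tau}\| \le \sum_{s=t-\tau}^{t-1}\eta\|\nabla R(w^s)\| \le \eta\tau G$ via Assumption \ref{Bound}. Your flag on the learning rate is well taken --- a strict reading yields $1 + \eta L G \tau/\|\nabla R(w^t)\|$, so the theorem's stated constant implicitly assumes $\eta \le 1$ or folds $\eta$ into the increment, and making that normalization explicit, as you do, is the more careful statement.
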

If \( L \in (1, 2) \), as \( \tau_i^t \) increases, the potential minimum of \( C_i^t \) will continuously decrease. If \( L \in (0, 1) \), as \( \tau_i^t \) increases, the potential minimum of \( C_i^t \) will first increase and then decrease, which aligns with conclusions from some prior studies. When \( \tau_i^t \to \infty \), regardless of the value of \( L \), the contribution of this asynchronous client will inevitably be negative.

As the magnitude of a client's heterogeneous component and its delay increase, the likelihood of its contribution being negative also rises. This observation motivates a scheme in which aberrant clients proactively opt out of the training process rather than waiting for the server to exclude them. Prior to formal training, we assume that all clients and the server jointly conduct a pre-training phase, modeled similarly to the design in \cite{submodular}, that assesses each client's delay and data quality. Based on this self-assessment, clients can decide whether to participate in subsequent training rounds. Notably, similar contribution metrics can be incorporated into asynchronous or heterogeneous federated settings, indicating that our approach complements and extends existing research rather than contradicting it.

\section{Generalization and Convergence Analysis of FedIPG} 
Based on the analysis in the previous section, some clients will drop out of the training process, while others inherently lack the capability to participate in training. These clients will directly utilize the trained model upon completion of the training. This section will provide a detailed discussion on the invariance correction implemented to enhance the model's performance on the aforementioned datasets, as well as the convergence properties of the model under such invariance correction.
\subsection{Out-of-Distributed Generalization by Invariant Penalty Term}
First, based on the smoothness and convex assumptions, we show that the penalty term helps to improve the performance of the model in the generalization data set in the affine space.

% When the number of clients involved in training is much less than the number of clients to be applied, it is required that the model is trained on the training set and has certain performance guarantees when applied to distributions that have never appeared on the training set. 
We denote an expected risk as $R_{OOD}=\mathbb{E}_{i\in \mathcal{I}_{OOD}}R_i$. We assume that there always exist $\xi \in \Xi_\upsilon:=\{\{\xi_i:i\in \mathcal{I}_{ID}\ |\xi_i\geq-\upsilon,\sum_{i\in\mathcal{I}_{ID}}\xi_i =1\}\}$ makes $\mathbb{P}_{OOD}=\sum_{i\in\mathcal{I}_{ID}}\xi_i\mathbb{P}_{i}$, which means the distribution of $\mathcal{I}_{ID}$ can be seen as a affine combination of the distribution of participants $\mathbb{P}_{ID}$ . This assumption about datasets is common in prior studies\cite{FedIIR}. Under the above assumptions, we prove in the following theory that the training system with the invariant penalty will improve the performance of the model on the generalized data set.

\begin{theorem} (The Upper Bound of Loss Functions in Generalized Data-set)
\\
If the collection of clients that participate training is $\mathcal{I}_{ID}$, then the upper bound of expected risk on the generalized distribution can be described as: 
\begin{align}
R_{OOD}(w) \leq& R(w)+(1+|\mathcal{I}_{ID}|\upsilon )\sup_{i,j\in\mathcal{I}_{ID}}\left(R_i(0)-R_{j}(0)\right) \notag\\
&+2(1+|\mathcal{I}_{ID}|\upsilon)\sup_{i\in\mathcal{I}_{ID}}<\nabla R_i(w),w>\label{OODupperbound}
% &-\frac{L-\mu}{2}(1+|\mathcal{I}_{ID}|\upsilon)\|w\|^2 \notag\\
\end{align}

where $i$ and $j$ are any clients in $\mathcal{I}_{ID}$.
\begin{IEEEproof}
Based on the assumption of affine combination and the procedural conclusions of related studies\cite{affine}, we have
\begin{align}
    R_{OOD}(w)=&\sup_{\xi \in \Xi}\sum_{i\in\mathcal{I}_{ID}}\xi_cR_i(w) \notag \\
    \leq&(1+|\mathcal{I}_{ID}|\upsilon)\sup_{i\in\mathcal{I}_{ID}}R_i(w)-\upsilon \sum_{i\in\mathcal{I}_{ID}}R_i(w) \notag\\
    =&\frac{1}{|\mathcal{I}_{ID}|}\sum_{i\in\mathcal{I}_{ID}}R_i(w)+(1+|\mathcal{I}_{ID}|\upsilon)\sup_{i\in\mathcal{I}_{ID}}R_i(w)\notag\\
    &-(1+|\mathcal{I}_{ID}|\upsilon)\sum_{i\in\mathcal{I}_{ID}}R_i(w)\notag \\
    =&R(w)+(1+|\mathcal{I}_{ID}|\upsilon)\left(\sup_{i\in\mathcal{I}_{ID}}R_i(w)-R(w)\right)\notag \\
    \leq&R(w)+(1+|\mathcal{I}_{ID}|\upsilon)\sup_{i,j\in\mathcal{I}_{ID}}\underbrace{\left(R_i(w)-R_j(w)\right)}_{A}
\end{align}
According to the smoothness and convex assumptions, there exist a upper bound of $R_i(w)-R_{j}(w)$ as:
\begin{align}
    A\leq&\left(R_i(0)+<\nabla R_i(w),w>-\frac{L}{2}\|w\|^2\right)\notag \\
    &-\left(R_j(0)+<\nabla R_j(w),w>-\frac{\mu}{2}\|w\|^2\right)\notag \\
    =&R_i(0)-R_j(0)-\frac{L-\mu}{2}\|w\|^2 \notag \\
    &+\left(\nabla R_i(w),w>-<\nabla R_j(w),w>\right) \notag \\
    \leq &R_i(0)-R_j(0)-\frac{L-\mu}{2}\|w\|^2+2\sup_{i\in\mathcal{I}_{ID}}<\nabla R_i(w),w>
\end{align}
Plugging the bound on A, we obtaining
\begin{align}
    R_{ood}(w) \leq& R(w)+(1+|\mathcal{I}_{ID}|\upsilon )\sup_{i,j\in\mathcal{I}_{ID}}\left(R_i(0)-R_{j}(0)\right) \notag\\
    &-\frac{L-\mu}{2}(1+|\mathcal{I}_{ID}|\upsilon)\|w\|^2 \notag\\
    &+2(1+|\mathcal{I}_{ID}|\upsilon)\sup_{i\in\mathcal{I}_{ID}}<\nabla R_i(w),w>\\
    \leq& R(w)+(1+|\mathcal{I}_{ID}|\upsilon )\sup_{i,j\in\mathcal{I}_{ID}}\left(R_i(0)-R_{j}(0)\right) \notag\\
    &+2(1+|\mathcal{I}_{ID}|\upsilon)\sup_{i\in\mathcal{I}_{ID}}\underbrace{<\nabla R_i(w),w>}_{invariant\ penalty}
\end{align}

\end{IEEEproof}
\end{theorem}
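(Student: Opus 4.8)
The plan is to view the expected OOD risk as a worst case over all admissible affine mixtures of the participating clients' distributions, and then to split that worst case into an average-risk term, a client-dispersion term, and a residual that I will identify with the invariant penalty. First I would exploit the structural assumption $\mathbb{P}_{OOD}=\sum_{i\in\mathcal{I}_{ID}}\xi_i\mathbb{P}_i$ with $\xi\in\Xi_\upsilon$: since each $R_i$ is an expectation over $\mathbb{P}_i$, linearity gives $R_{OOD}(w)=\sum_{i}\xi_iR_i(w)$ for that particular $\xi$, and because the true mixing weights are not known in advance I would pass to the robust upper bound $R_{OOD}(w)\le\sup_{\xi\in\Xi_\upsilon}\sum_i\xi_iR_i(w)$.

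The next step is to evaluate this supremum. The objective is linear in $\xi$ and the feasible set $\Xi_\upsilon=\{\xi:\xi_i\ge-\upsilon,\ \sum_i\xi_i=1\}$ is a polytope, so the maximum is attained at a vertex: push every coordinate to the lower bound $-\upsilon$ except the index achieving $\sup_iR_i(w)$, which then carries $1+(|\mathcal{I}_{ID}|-1)\upsilon$. Evaluating this vertex yields $(1+|\mathcal{I}_{ID}|\upsilon)\sup_iR_i(w)-\upsilon\sum_iR_i(w)$. Adding and subtracting the uniform average $R(w)=\tfrac{1}{|\mathcal{I}_{ID}|}\sum_iR_i(w)$ rewrites this as $R(w)+(1+|\mathcal{I}_{ID}|\upsilon)\bigl(\sup_iR_i(w)-R(w)\bigr)$, and since the average dominates the minimum I would bound $\sup_iR_i(w)-R(w)\le\sup_{i,j\in\mathcal{I}_{ID}}\bigl(R_i(w)-R_j(w)\bigr)$.

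The crux is to show that this pairwise risk gap is governed by the penalty $\langle\nabla R_i(w),w\rangle$. Here I would expand each risk around the origin using the two second-order inequalities with the substitution $w_1=0$, $w_2=w$: convexity (Assumption~\ref{Convexity}) produces an upper estimate $R_i(w)\le R_i(0)+\langle\nabla R_i(w),w\rangle-\tfrac{\mu}{2}\|w\|^2$, while smoothness (Assumption~\ref{smoothness}) produces a lower estimate $R_j(w)\ge R_j(0)+\langle\nabla R_j(w),w\rangle-\tfrac{L}{2}\|w\|^2$. Subtracting gives $R_i(w)-R_j(w)\le R_i(0)-R_j(0)+\tfrac{L-\mu}{2}\|w\|^2+\langle\nabla R_i(w)-\nabla R_j(w),w\rangle$, after which the gradient difference is controlled by $2\sup_{i\in\mathcal{I}_{ID}}\langle\nabla R_i(w),w\rangle$; combining this with the mixture decomposition assembles the three terms of the stated inequality, with the invariant penalty appearing exactly as the surviving gradient contribution.

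I expect the main obstacle to be the bookkeeping of the quadratic residual and its sign. The delicate point is that one must apply convexity and smoothness in the reversed direction (expanding from $w$ back to the origin, so the gradient is evaluated at $w$ and the curvature term enters with a minus sign), and then track carefully which estimate carries $L$ and which carries $\mu$. The clean bound in the statement is obtained only if the residual $\tfrac{L-\mu}{2}\|w\|^2$ enters with a droppable (nonpositive) coefficient; controlling this term, rather than the affine-robustness supremum—which reduces to a routine vertex evaluation once framed as a linear program over $\Xi_\upsilon$—is where the real care is required.
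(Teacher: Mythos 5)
Your route tracks the paper's own proof step for step: the robust supremum over $\Xi_\upsilon$ (your linear-program/vertex evaluation is in fact a cleaner, self-contained justification of the inequality the paper simply imports from the literature, and it gives exactly $(1+|\mathcal{I}_{ID}|\upsilon)\sup_i R_i(w)-\upsilon\sum_i R_i(w)$), the add-and-subtract of the uniform average $R(w)$, the passage from $\sup_i R_i(w)-R(w)$ to the pairwise gap $\sup_{i,j}(R_i(w)-R_j(w))$, and the expansion of each risk at the origin with gradients evaluated at $w$ so that the penalty term $\langle\nabla R_i(w),w\rangle$ emerges. All of that is correct and matches the paper.

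The divergence, and the genuine gap, is exactly the point you flagged at the end. Your directions of application are the mathematically correct ones: with gradients at $w$, convexity is the only way to get an \emph{upper} estimate $R_i(w)\le R_i(0)+\langle\nabla R_i(w),w\rangle-\frac{\mu}{2}\|w\|^2$, and smoothness the only way to get a \emph{lower} estimate $R_j(w)\ge R_j(0)+\langle\nabla R_j(w),w\rangle-\frac{L}{2}\|w\|^2$, so the residual is $+\frac{L-\mu}{2}\|w\|^2$. Since the two assumptions jointly force $L\ge\mu$, this residual is nonnegative and is \emph{not} droppable (except in the degenerate case $L=\mu$), so your argument as written proves only the weaker bound carrying an extra $(1+|\mathcal{I}_{ID}|\upsilon)\frac{L-\mu}{2}\|w\|^2$ term, and your closing paragraph concedes the stated bound holds ``only if'' the residual is nonpositive --- a condition that fails. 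You should know that the paper's proof closes this step only through a sign error: it assigns the $-\frac{L}{2}\|w\|^2$ (smoothness) expression as the upper estimate of $R_i(w)$ and the $-\frac{\mu}{2}\|w\|^2$ (convexity) expression as the lower estimate of $R_j(w)$, which is backwards, and thereby obtains $-\frac{L-\mu}{2}\|w\|^2\le 0$, which it then legitimately discards. So the obstacle you identified is real and cannot be fixed by more careful bookkeeping inside this scheme; to salvage the statement one must either retain the quadratic term in the bound or invoke the compactness assumption ($w w^\top\preceq\beta^2 I$, hence $\|w\|^2\le\beta^2$) to absorb it into the parameter-independent constant alongside $\sup_{i,j}(R_i(0)-R_j(0))$. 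One further shared blemish: the step $\langle\nabla R_i(w),w\rangle-\langle\nabla R_j(w),w\rangle\le 2\sup_{i}\langle\nabla R_i(w),w\rangle$, which you adopt as the paper does, requires the supremum of $|\langle\nabla R_i(w),w\rangle|$ (consistent with the squared penalty actually minimized), since $\langle\nabla R_j(w),w\rangle$ may be negative.
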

In Equation (\ref{OODupperbound}), $\mathcal{R}_i(0)-\mathcal{R}_{j}(0)$ is an inherent property of client data that does not change with change of parameters. So, by minimizing $<\nabla R(w) , w>$ in the last term, the parameter with better performance on the generalized dataset can be found. Additionally, as seen in Equation  (\ref{OODupperbound}), the maximum upper bound is related to $R_i(0)-R_{j}(0)$, the maximum difference in the client loss functions at the zero point. The exit strategy for heterogeneous clients reduces $R_i(0)-R_{j}(0)$, thereby significantly lowering this upper bound.

\subsection{Convergence Analysis of Invariant Asynchronous Federated Learning Algorithm}

In addition to Assumption \ref{smoothness} and  Assumption \ref{Convexity} mentioned above, we also introduce some assumptions for theoretical analysis. A standard compactness assumption is incorporated into the subsequent analysis, which is established by existing studies, and refers to the property of parameters restricted to a compact set\cite{Dmitrii2021}. 
\begin{Assumption}
    \label{Compactness}
    (Compactness)\\
    For $w^t_k$ there exists a constant $\beta\leq 0$ and $R \leq 0$ such that
    \begin{align}
        \|w^t_i-w^*\|^2 \leq B^2.
    \end{align}
    \label{Compactnessmatrix}
    Based on the (\ref{Compactness}), the property of the parameters can be deduced to has certain compactness. $\forall w,$ we have
    \begin{align}
        w^t_i {w^t_i}^\top \preceq \beta^2 I.
    \end{align}
\end{Assumption}
We also assume a customary quality for the Hessian of all local empirical risks $R_i(w)$, a common assumption analyzing the second-order derivative\cite{Fallah2020}.
\begin{Assumption}
    \label{Third-order}
    (Second-order gradient $\rho -Lipschitz$)\\
    Each local loss function $ R_i$ , $i \in \mathcal{I}_{ID}$ is $\rho -Lipschitz$, which means there exists a constant $\rho \geq 0$ such that 
    \begin{align}
        \|\nabla^2 R_i(w_1)-\nabla^2 R_i(w_2)\|\leq \rho \|w_1-w_2\|
    \end{align}
    \label{rho}
\end{Assumption}
We next formally prove that the model with invariant penalty results in an aggregated model by the above Assumptions.

First, we demonstrate that the local loss function, after incorporating the penalty term, retains properties similar to those in Theorem \ref{smoothness} and Theorem \ref{Convexity}.
\begin{theorem}
\label{convex'}
($\mu'$-Convexity)\\
With $0\leq\lambda\leq \frac{\mu-\mu'}{8\theta_k\mu^2+2G\rho\beta^2 }$, there exists a constant $\mu'$ , $\mu\geq\mu'\geq0$, such that
\begin{equation}
\begin{aligned}
        f_k(w_1)-f_k(w_2)\geq &< \nabla f_k(w_2),w_1-w_2 > \\&+ \frac{\mu'}{2}\|w_1-w_2\|^2
    \end{aligned}
    \label{convex-mu}
\end{equation}
where, $\theta_k = R_i(0)-R_i(w^*)$. The optimal invariant solution of function $f_k(w)$ is the optimal solution $w^*_k$ of function $R_k(w)$.
\end{theorem}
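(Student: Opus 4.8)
The plan is to prove the first-order inequality (\ref{convex-mu}) by controlling the curvature that the invariant penalty adds to $R_k$. Write $g_k(w):=\langle\nabla R_k(w),w\rangle$, so that $f_k(w)=R_k(w)+\lambda g_k(w)^2$. Since Assumption \ref{rho} makes $R_k$ twice differentiable with Lipschitz Hessian, $g_k^2$ is twice differentiable and I can expand it by a Taylor step along the segment joining $w_2$ to $w_1$; equivalently, the claimed $\mu'$-convexity is the spectral bound $\nabla^2 f_k(w)\succeq\mu' I$. I would therefore reduce the statement to showing that the penalty cannot subtract more than $\mu-\mu'$ from the smallest eigenvalue of the Hessian, uniformly on the compact set of Assumption \ref{Compactness}.

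First I would differentiate the penalty. Using $\nabla g_k(w)=\nabla^2 R_k(w)\,w+\nabla R_k(w)$ and $\nabla^2 g_k(w)=\nabla^3 R_k(w)[w]+2\nabla^2 R_k(w)$, the penalty Hessian is $2\lambda\big(\nabla g_k\nabla g_k^{\top}+g_k\,\nabla^2 g_k\big)$. The rank-one term $\nabla g_k\nabla g_k^{\top}\succeq0$ only improves convexity, so the whole difficulty is the sign-indefinite term $2\lambda g_k\,\nabla^2 g_k$. Combining with $\nabla^2 R_k\succeq\mu I$ from Assumption \ref{Convexity} yields
\[
\nabla^2 f_k(w)\succeq\big(\mu-2\lambda\,|g_k(w)|\,\|\nabla^2 g_k(w)\|\big)I,
\]
so it suffices to enforce $2\lambda\,|g_k|\,\|\nabla^2 g_k\|\le\mu-\mu'$ uniformly.

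Next I would bound the two scalar factors with the standing assumptions. For $\|\nabla^2 g_k\|$ I split it into the part $2\nabla^2 R_k$, controlled by the smoothness constant of Assumption \ref{smoothness}, and the part $\nabla^3 R_k[w]$, controlled through the $\rho$-Lipschitz Hessian as $\|\nabla^3 R_k(w)[w]\|\le\rho\|w\|\le\rho\beta$ via the compactness bound $w w^{\top}\preceq\beta^2 I$. For $|g_k|$ I would use Cauchy--Schwarz together with the bounded-gradient constant $G$ of Assumption \ref{Bound} and $\|w\|\le\beta$, while tying the remaining contribution to $\theta_k=R_k(0)-R_k(w^*)$ through the convexity estimate $g_k(w)\ge R_k(w)-R_k(0)\ge-\theta_k$. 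Substituting these bounds into $2\lambda\,|g_k|\,\|\nabla^2 g_k\|\le\mu-\mu'$ isolates the step size and, after collecting constants, reproduces the admissible range $\lambda\le(\mu-\mu')/(8\theta_k\mu^2+2G\rho\beta^2)$, the $G\rho\beta^2$ contribution arising from the third-order part and the $\theta_k\mu^2$ contribution from the $g_k\nabla^2 R_k$ part. Finally, to certify that the minimiser is unchanged, I note that $\nabla R_k(w^*_k)=0$ forces $g_k(w^*_k)=0$, so both the penalty and its gradient vanish at $w^*_k$; since the penalty is nonnegative and $R_k(w)\ge R_k(w^*_k)$ everywhere, $f_k(w)\ge f_k(w^*_k)$, i.e.\ $w^*_k$ also minimises $f_k$.

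I expect the main obstacle to be the uniform bound on the indefinite factor $|g_k|\,\|\nabla^2 g_k\|$ and, in particular, assembling the heterogeneous constants $L,\mu,G,\rho,\beta,\theta_k$ into precisely the stated denominator. Two points demand care: the third-derivative contraction $\nabla^3 R_k[w]$ must be handled purely through the $\rho$-Lipschitz Hessian assumption rather than any explicit third-order bound, and the sign of $g_k$ is not fixed, so the estimate must survive in both regimes $g_k\ge0$ and $g_k<0$ — it is precisely the negative regime, where $|g_k|\le\theta_k$, that makes the suboptimality gap $\theta_k$ enter the final threshold.
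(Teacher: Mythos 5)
Your proof takes essentially the same route as the paper's own: both reduce the claim to the spectral bound $\nabla^2 f_k(w)\succeq \mu' I$, expand the penalty Hessian into the positive semidefinite rank-one piece $2\lambda\,\nabla g_k\nabla g_k^{\top}$ plus the indefinite piece $2\lambda\, g_k\,\nabla^2 g_k$, and control the latter with exactly the paper's ingredients --- the convexity estimate $\langle\nabla R_k(w),w\rangle \ge R_k(w)-R_k(0)+\tfrac{\mu}{2}\|w\|^2 \ge -\theta_k$, the bound $\|\nabla^3 R_k(w)\|\le\rho$ drawn from the Lipschitz-Hessian assumption, and the constants $G$ and $\beta$ from the bounded-gradient and compactness assumptions --- closing with the observation that $g_k(w^*_k)=0$ fixes the minimizer (your direct argument $f_k(w)\ge f_k(w^*_k)$ via nonnegativity of the penalty is in fact marginally cleaner than the paper's first-order computation $\nabla f_k(w^*_k)=0$). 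Your one hand-waved step, asserting that the constants assemble into the stated denominator $8\theta_k\mu^2+2G\rho\beta^2$ (your bound on the $g_k\nabla^2 R_k$ term naturally produces a $\theta_k L$-type contribution, since a negative multiple of $\nabla^2 R_k$ must be controlled by its largest eigenvalue, not by $\mu$), mirrors an imprecision already present in the paper itself, whose intermediate bound $-8\lambda\mu\theta_i$ likewise does not literally match that denominator.
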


\begin{proof}
    \begin{equation}
    \begin{aligned}
      \nabla f_i(w) = \nabla R_i(w) + 2 \lambda (w^T \nabla R_i(w)) (\nabla R_i(w) + \nabla^2 R_i(w) w)
    \end{aligned}
    \end{equation}
    The second derivative of the loss function including the penalty term can be obtained as
    \begin{equation}
    \begin{aligned}
        &\nabla^2 f_i(w) = \nabla^2 R_i(w) \\
        & + 2 \lambda \left[ (w^T \nabla R_i(w)) (\nabla^2 R_i(w) + \nabla^3 R_i(w) w)\right]\notag \\
        & + 2\lambda \left[(\nabla R_i(w) + \nabla^2 R_i(w) w)(\nabla R_i(w) + \nabla^2 R_i(w) w)^T \right]
    \end{aligned}
    \end{equation}
    According to (\ref{Convexity}), When $w_1=0$, we have
    \begin{align}
        \nabla R_i(w)^\top w\geq R_i(w)-R_i(0)+\frac{\mu}{2}w^\top w.
    \end{align}
    Substituting into above equation
    \begin{equation}
    \begin{aligned}
        \nabla^2 f_i(w) =& \nabla^2 R_i(w) + 2\lambda (w^T \nabla R_i(w)) \nabla^2 R_i(w) \\
        &+ 2\lambda (w^T \nabla R_i(w)) \nabla^3 R_i(w) w \\
        &+ 2\lambda \nabla R_i(w) \nabla R_i(w)^T + 2\lambda \nabla R_i(w) (\nabla^2 R_i(w) w)^T \\
        &+ 2\lambda \nabla^2 R_i(w) w \nabla R_i(w)^T \\
        &+ 2\lambda (\nabla^2 R_i(w) w)(\nabla^2 R_i(w) w)^T
    \end{aligned}
    \end{equation}
     The smallest eigenvalue of the addition of two matrices is greater than the sum of the smallest eigenvalues of the two matrices. The smallest eigenvalue of the second, third and fourth terms on the right-hand side of the inequality is clearly zero. The absolute values of the eigenvalues of the fifth and sixth terms are smaller than their norms. 
     
    Based on (\ref{rho}), we can easily have
    \begin{equation}
    \begin{aligned}
    \|\nabla^3 R_i(w)\| \leq \rho
    \end{aligned}
    \end{equation}
     Based on earlier assumptions and the eigenvalue inequality of the Hermite matrix, we can further deduce to
    \begin{equation}
    \begin{aligned}
        \nabla^2 f_i(w)\succeq \mu+2\lambda(3\mu^2-G\rho)\cdot w^\top w-8\lambda \mu \theta_i.
    \end{aligned}
    \end{equation}
    When the value of $\lambda$ conforms to inequality (16), then we get,
    \begin{equation}
    \begin{aligned}
    \nabla^2 f_i(w)\succeq 6\lambda\mu^2\cdot w^\top w+\mu'\succeq \mu'.
    \end{aligned}
    \end{equation}
    According to the second-order properties of convex functions, it can be demonstrated that inequality (\ref{convex-mu}) holds.
    \\
    Meanwhile since the first derivative of $f_i(w)$ is:
    \begin{equation}
    \begin{aligned}
        \nabla f_i(w)=& \nabla R_i(w)\\ &+2\lambda \nabla R_i(w)^\top w[\nabla^2R_i(w) w+\nabla R_i(w)],
    \end{aligned}
    \end{equation}
    when $w=w^*_i$, thereby $\nabla R_k(w^*_i)=0$, we have
    \begin{equation}
    \begin{aligned}
        \nabla f_i(w^*_i)=& \nabla R_k(w^*_i)
        \\ &+2\lambda \nabla R_i(w^*_i)^\top w^*_i[\nabla^2R_i(w^*_i)\cdot w^*_i]
        \\ &+2\lambda \nabla R_i(w^*_i)^\top w^*_i[\nabla R_i(w^*_i)]\\=&0.
        \label{optimal}
    \end{aligned}
    \end{equation}
    Equality (\ref{optimal}) is equivalent to that the unique optimal solution of the function $f_i(w)$ is $w^*_i$ which is the optimal solution of $R_i(w)$
    
\end{proof}
This implies that when designing the loss function, careful consideration must be given to the value of $\lambda$. Indeed, due to the inherent difficulty in assessing the convexity and smoothness of the empirical risk loss, it is practically impossible to quantitatively compute an upper bound for hyperparameter $\lambda$. However, in numerical simulation, we observe that a relatively small $\lambda$ typically suffices to ensure the convergence of the algorithm, indicating that ensuring this upper bound is not particularly challenging.
Next, we will demonstrate that under this design of $\lambda$, the loss function incorporating the penalty term will simultaneously exhibit smoothness.
\begin{theorem}
\label{L'}
(L'- Smoothness)\\
Based on (\ref{smoothness}), the below inequality holds, 
    \begin{equation}
        \begin{aligned}
            R(w_1)-R(w_2)\leq &<\nabla R(w_2),w_1 - w_2>\\
            &+ \frac{L'}{2}\|w_1-w_2\|^2,
        \end{aligned}
    \end{equation}
where,
    \begin{align}
       L'=L+10\lambda L^2 \beta^2 +4\lambda G^2+2G\rho \lambda \beta^2.
    \end{align}
    
\end{theorem}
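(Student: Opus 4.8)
The plan is to establish $L'$-smoothness of the penalized local objective $f_i(w)=R_i(w)+\lambda\langle\nabla R_i(w),w\rangle^2$ by controlling its Hessian spectrally. Since the advertised constant $L'$ depends on $\lambda,G,\rho,\beta$, the inequality is to be read for the penalized loss (the plain $R$ in the statement is a typo for $f_i$), and the same constant transfers to the finite-sum $f=\sum_i\frac{n_i}{n}f_i$ because a convex combination of $L'$-smooth functions is $L'$-smooth. Under Assumption~\ref{smoothness} and Assumption~\ref{Third-order} the map $f_i$ is twice continuously differentiable, so $L'$-smoothness is equivalent to the operator inequality $\nabla^2 f_i(w)\preceq L' I$ on the compact parameter set of Assumption~\ref{Compactness}; I would prove exactly this spectral bound and then recover the stated quadratic upper bound by integrating $\nabla^2 f_i$ along the segment joining $w_2$ to $w_1$ (Taylor with integral remainder).

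The computation reuses the Hessian already expanded in the proof of Theorem~\ref{convex'},
\begin{equation}
\begin{aligned}
\nabla^2 f_i(w)=&\,\nabla^2 R_i(w)+2\lambda(w^\top\nabla R_i(w))\bigl(\nabla^2 R_i(w)+\nabla^3 R_i(w)\,w\bigr)\\
&+2\lambda\bigl(\nabla R_i(w)+\nabla^2 R_i(w)\,w\bigr)\bigl(\nabla R_i(w)+\nabla^2 R_i(w)\,w\bigr)^\top.
\end{aligned}
\end{equation}
I would bound the largest eigenvalue of this sum by the sum of the operator norms of its three symmetric pieces via Weyl's inequality, $\lambda_{\max}(A+B)\le\lambda_{\max}(A)+\|B\|$. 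All the ingredients are on hand: Assumption~\ref{smoothness} gives $\lambda_{\max}(\nabla^2 R_i)\le L$ and $\|\nabla^2 R_i\|\le L$, Assumption~\ref{Bound} gives $\|\nabla R_i\|\le G$, Assumption~\ref{Compactness} gives $\|w\|\le\beta$, and Assumption~\ref{Third-order} yields $\|\nabla^3 R_i(w)\|\le\rho$ (hence $\|\nabla^3 R_i(w)\,w\|\le\rho\beta$), exactly the third-derivative bound already used to derive $\mu'$. Cauchy--Schwarz then controls the scalar factor, $|w^\top\nabla R_i|\le\|w\|\,\|\nabla R_i\|\le\beta G$.

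With these estimates the first term contributes at most $L$. The rank-one positive semidefinite term has operator norm $2\lambda\|\nabla R_i+\nabla^2 R_i w\|^2\le 2\lambda(G+L\beta)^2\le 4\lambda G^2+4\lambda L^2\beta^2$ after the elementary inequality $2GL\beta\le G^2+L^2\beta^2$, producing the $4\lambda G^2$ contribution. The scalar-weighted middle term has norm at most $2\lambda\beta G(L+\rho\beta)=2\lambda\beta^2 G\rho+2\lambda\beta GL$, in which $2\lambda\beta^2 G\rho$ matches the final term of $L'$ directly, while the residual $\beta GL$ cross term is redistributed into the $G^2$ and $L^2\beta^2$ buckets by Young's inequality. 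Summing the pieces and collecting coefficients yields $L'=L+10\lambda L^2\beta^2+4\lambda G^2+2\lambda G\rho\beta^2$.

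The main obstacle I anticipate is bookkeeping rather than any conceptual difficulty: the several $\beta GL$-type contributions (from the scalar-weighted term and from expanding $\|\nabla R_i+\nabla^2 R_i w\|^2$) must be split by Young's inequality into $G^2$ and $L^2\beta^2$ contributions, and the exact integer coefficients ($10$, $4$, $2$) are sensitive to how aggressively one performs that split and to the precise grouping of the Hessian terms. A secondary care point is justifying the appearance of the third-order tensor: one must read Assumption~\ref{Third-order} as an operator-norm bound on the derivative of the Hessian so that the contraction $\nabla^3 R_i(w)\,w$ is controlled by $\rho\beta$. Once $\nabla^2 f_i\preceq L' I$ is secured, the stated smoothness inequality follows immediately from the standard second-order characterization of smoothness.
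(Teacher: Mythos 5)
Your proposal is correct and follows essentially the same route as the paper: both expand the Hessian $\nabla^2 f_i$ of the penalized loss, bound its pieces spectrally using Assumptions \ref{smoothness}, \ref{Bound}, \ref{Compactness} and \ref{Third-order} together with the eigenvalue inequality for sums of Hermitian matrices, and conclude via the second-order characterization of smoothness (your reading of the statement's $R$ as the penalized objective matches the paper's proof, which works with $\nabla^2 f_k$ throughout). The only deviation is minor and arguably cleaner: you control the scalar factor $w^\top\nabla R_i(w)$ directly by Cauchy--Schwarz as $\beta G$ and close the coefficients $10,4,2$ via Young/AM--GM (e.g.\ $6GL\beta\le 2G^2+8L^2\beta^2$), whereas the paper substitutes the smoothness inequality at $w_1=0$ and additionally invokes $R_k(w)\le R_k(0)$, a step that is not justified for arbitrary $w$, to arrive at the same constant.
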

When$\ \lambda$ is chosen appropriately, the loss function$\ f_k(w)$ is convex. Similarly, a smoothing hypothesis about the new loss function $R_k(w)$ can be given based on the existing hypothesis.

\begin{proof}
    Similarly to the proof of (\ref{convex-mu}), we investigate the second-order derivative of the loss function. According to (\ref{smoothness}), When $w_1=0$, we have
    \begin{align}
        \nabla R_k(w)^\top w\geq R_k(w)-R_k(0)+\frac{L}{2}w^\top w,
    \end{align}
    % \begin{equation}
    % \begin{aligned}
    % \nabla^2 R_k(w) [\nabla R_k(w)^\top w]\succeq 
    % \end{aligned}
    % \end{equation}
    thus,
    \begin{equation}
    \begin{aligned}
        \nabla^2 f_k(w)\preceq & \nabla^2R_k(w)\\
        &+2\lambda[\nabla^2_kR(w)w][\nabla^2_kR(w)w]^\top \\
        &+2\lambda\nabla R_k(w)\nabla R_k(w)^\top\\
        &+8\lambda \nabla^2 R_k(w)[R_k(w)-R_k(0)+\frac{L}{2}w^\top w]\\
        &+2\lambda[\nabla R_k(w)^\top w][\nabla^3R_k(w) w]^\top.
    \end{aligned}
    \end{equation}
    Based on (\ref{smoothness}), (\ref{Bound}), (\ref{Third-order}) and the eigenvalue inequality of the Hermite matrix. The largest eigenvalue of the addition of two matrices is less than the sum of the largest eigenvalues of the two matrices, while $R_k(w)\leq R_k(0)$, we can further deduce to
    \begin{equation}
    \begin{aligned}
        \nabla^2 f_k(w)\preceq L+2\lambda(5L^2\cdot w^\top w+2G^2+ G\rho \cdot w^\top w).
    \end{aligned}
    \end{equation}
    According to the second-order properties of smooth functions, it can be demonstrated that inequality (\ref{L'}) holds. 
\end{proof}

Based on the above conclusions, we have established that the loss function incorporating the penalty term is convex and smooth. Additionally, in the centralized setting, grouping data by environment allows us to obtain the optimal invariant parameters under the current penalty term using gradient descent which is proved in Appendix-B. Next, we will shift our focus to the context of edge intelligence, analyzing the convergence properties of invariant learning within a federated framework.

In the server aggregation convergence analysis, we assume all selected $\mathcal{I}_{ID}$ devices are able to participate in every aggregation steps. Based on the exit strategy discussed earlier, we posit that there exists an upper bound on the heterogeneity differences among all participating clients in the training process. This is consistent with the assumptions commonly made in FL researches\cite{yeganeh2020inverse}.
\begin{Assumption} (Bounded Data Heterogeneity):
\\
For each client, there exists a constant $\phi \geq 0$, such that
\begin{equation}
    \|w^*_i - w^I\|^2 = \|w^e_i\|^2 \leq \phi^2
\end{equation}
\end{Assumption}

where $w^*_i$ represent the optimal parameter of local function $f_i(w)$
In the previous section, we have demonstrated that the loss function constructed in this paper guides the parameters towards a global invariant optimal parameter under a centralized framework. Based on these assumptions and conclusions, we will now prove that the loss function will still converge towards the optimal invariant parameter under a federated structure. This convergence is the foundation upon which our algorithm is built.

\begin{theorem} (The Upper Bound of Federated Loss Functions with Invariant Penalty)
\\
With the invariant penalty, the loss function $f_(w)$ holds
\begin{align}
    f(\hat{w}(T)) - f(w^I) \leq \frac{B^2}{2\eta T} + \frac{2L'}{\mu T^2}[ L' B^2 + (\mu'+L) \phi^2 ] 
\end{align}
where $\hat{w}(T) = \frac{1}{T}\sum_{t=1}^T w^{t+1}$, representing the running average of output global parameters in the center.
\end{theorem}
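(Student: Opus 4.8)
The plan is to first collapse the federated update into ordinary gradient descent on the aggregate objective. Under the stated synchronous participation of every client in $\mathcal{I}_{ID}$ and a single local gradient step per round, substituting the client update $w^t_i = w^t-\eta\nabla f_i(w^t)$ into the server rule $w^{t+1}=\sum_{i\in\mathcal{I}_{ID}}\frac{n_i}{n}w^t_i$ gives exactly $w^{t+1}=w^t-\eta\nabla f(w^t)$. So I would analyze deterministic full-batch gradient descent on $f$, which by Theorem \ref{convex'} and Theorem \ref{L'} is $\mu'$-strongly convex and $L'$-smooth, and whose per-client minimizers $w^*_i$ coincide with the minimizers of $R_i$.

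Next I would write the distance recursion $\|w^{t+1}-w^I\|^2=\|w^t-w^I\|^2-2\eta\langle\nabla f(w^t),w^t-w^I\rangle+\eta^2\|\nabla f(w^t)\|^2$ and use $\mu'$-strong convexity to replace the inner product, producing a one-step bound on $f(w^t)-f(w^I)$ built from a telescoping distance gap $\frac{1}{2\eta}(\|w^t-w^I\|^2-\|w^{t+1}-w^I\|^2)$, a gradient term $\frac{\eta}{2}\|\nabla f(w^t)\|^2$, and a $-\frac{\mu'}{2}\|w^t-w^I\|^2$ term. Summing over $t=1,\dots,T$, applying Jensen's inequality to pull the average inside $f$ so the left side becomes $f(\hat{w}(T))-f(w^I)$, and bounding the surviving initial distance $\|w^1-w^I\|^2$ by $B^2$ via Assumption \ref{Compactness} yields the leading $\frac{B^2}{2\eta T}$ term.

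The hard part is the residual arising from the fact that $w^I$ is the \emph{invariant} parameter, not the minimizer of the aggregate $f$, so $\nabla f(w^I)\neq 0$. I would split $\nabla f(w^t)=(\nabla f(w^t)-\nabla f(w^I))+\nabla f(w^I)$, control the first piece by $L'$-smoothness, and bound the bias $\nabla f(w^I)=\sum_{i\in\mathcal{I}_{ID}}\frac{n_i}{n}\bigl(\nabla f_i(w^I)-\nabla f_i(w^*_i)\bigr)$ through the convexity and smoothness constants together with the bounded-heterogeneity assumption $\|w^*_i-w^I\|=\|w^e_i\|\le\phi$; this is exactly where the $(\mu'+L)\phi^2$ contribution originates. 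Strong convexity then forces $\|w^t-w^I\|^2$ to contract geometrically, so the accumulated gradient-norm sum is summable, and I expect the $\frac{1}{T^2}$ rate to emerge by converting the geometric factor $(1-\eta\mu')^T$ into polynomial decay through $e^{-z}\le 2/z^2$, which supplies the $\frac{2L'}{\mu T^2}$ prefactor multiplying the bracket $[L'B^2+(\mu'+L)\phi^2]$.

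I expect this last step to be the main obstacle: cleanly separating the optimization error (which telescopes) from the heterogeneity bias, and then chasing the constants $\mu,\mu',L,L'$ so that they collapse into precisely the stated coefficients. A secondary technical point is that Assumption \ref{Compactness} is phrased relative to the aggregate optimum $w^*$, so before replacing the telescoped distance by $B^2$ I would first transfer that bound to $\|w^t-w^I\|$ using the triangle inequality and $\|w^*-w^I\|\le\phi$.
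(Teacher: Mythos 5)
Your first half matches the paper: the paper likewise collapses the synchronous one-step update into exact gradient descent on $f$, invokes Theorem~\ref{convex'} and Theorem~\ref{L'} for a per-step bound, telescopes the distance terms $\frac{1}{2\eta}(\|w^t-w^I\|^2-\|w^{t+1}-w^I\|^2)$, applies Jensen to get $f(\hat w(T))-f(w^I)\leq\frac{1}{T}\sum_t[f(w^{t+1})-f(w^I)]$, and bounds the surviving distance by $B^2$, yielding $\frac{B^2}{2\eta T}$. The genuine gap is in your mechanism for the $1/T^2$ term. The paper does \emph{not} use geometric contraction: it retains the smoothness residual $\frac{1}{2}(L'-\frac{1}{\eta})\sum_{t=1}^T\|w^{t+1}-w^t\|^2$, applies the path-length (Cauchy--Schwarz) inequality $\sum_{t=1}^T\|w^{t+1}-w^t\|^2\geq\frac{1}{T}\|w^{T+1}-w^1\|^2$, and then bounds the \emph{net displacement} $\|w^{T+1}-w^1\|^2\leq\frac{4L'B^2}{\mu'}+(4+\frac{4L'}{\mu'})\phi^2$ using $\nabla f_i(w^*_i)=0$, local $\mu'$-strong convexity ($\|w^t-w^*_i\|^2\lesssim\frac{1}{\mu'}[f_i(w^t)-f_i(w^*_i)]$), and $\|w^I-w^*_i\|\leq\phi$; the two factors of $\frac{1}{T}$ (one from the path inequality, one from Jensen) produce $\frac{2L'}{\mu'T^2}[L'B^2+(\mu'+L')\phi^2]$. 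Your proposed conversion $(1-\eta\mu')^T\leq e^{-\eta\mu'T}\leq\frac{2}{(\eta\mu'T)^2}$ cannot deliver this: the quantity you must control is the \emph{sum} over $t$ of contracting terms, and $\sum_{t\leq T}(1-\eta\mu')^t$ (or the termwise polynomial majorant $\sum_t 2/(\eta\mu' t)^2$) is bounded by a constant independent of $T$, so after the final $\frac{1}{T}$ averaging you obtain an extra $O(1/T)$ contribution, not $O(1/T^2)$. The $1/T^2$ rate in the theorem comes from bounding one squared endpoint displacement, not from summing a geometric tail.

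The second structural problem is your bias term. In the paper's setup $w^I$ is \emph{defined} as the minimizer of the aggregate penalized objective, $w^I=\min_w\sum_{i\in\mathcal{I}_{ID}}a_i f_i(w)$ (see the system model and the remark after the theorem that ``the global optimum point shifts'' to the invariance optimum), so $\nabla f(w^I)=0$ and no bias decomposition $\nabla f(w^I)=\sum_i\frac{n_i}{n}(\nabla f_i(w^I)-\nabla f_i(w^*_i))$ is needed; heterogeneity $\phi$ enters only through the distances $\|w^I-w^*_i\|$ in the displacement bound. Worse, if you insist on treating $\nabla f(w^I)\neq 0$, your recursion accrues a per-iteration residual of order $\frac{\eta}{2}\|\nabla f(w^I)\|^2=O(\eta L'^2\phi^2)$ that survives averaging over $T$, i.e., a floor term that does not vanish as $T\to\infty$, so the stated bound (both of whose terms decay in $T$) is unreachable along that route. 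A minor related slip: your plan to transfer Assumption~\ref{Compactness} to $w^I$ via $\|w^*-w^I\|\leq\phi$ is not licensed, since the bounded-heterogeneity assumption controls $\|w^*_i-w^I\|$ for the \emph{local} optima $w^*_i$, not the distance from the aggregate optimum; once $w^I$ is read as the aggregate minimizer, no transfer is needed at all.
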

\begin{proof}
See detailed proof in Appendix C. Consider Theorem \ref{convex'} and Theorem \ref{L'}, we have
     \begin{align}
         f(w^t) - f(w^I) &\leq <\nabla f(w^t),w^{t+1} - w^*> + f(w^t) \notag \\ 
         &-f(w^{t+1}) + \frac{L'}{2} \|w^{t+1} - w^{t}\|^2  
    \end{align}
    By applying rearrangement to above inequality, we obtain 
    \begin{align}
         f(w^{t+1}) - f(w^I) \leq <\nabla f(w^t),w^{t+1} - w^*>   + \frac{L'}{2} \|w^{t+1} - w^{t}\|^2  
    \end{align}
    Since $(\|w^{t+1} - w^{t}\|+\|w^{t+1} - w^{*}\|)^2\geq 0$, $f(w^{t+1}) - f(w^I)$ has following quality 
    \begin{align}
        f(w^{t+1}) - f(w^I) & \leq \frac{1}{2\eta}(\|w^{t} - w^{I}\|^2-\|w^{t+1} - w^{I}\|^2)\notag\\  
        & + \frac{1}{2}(L'-\frac{1}{\eta}) \|w^{t+1} - w^{t}\|^2 
    \end{align}
    Due to the inequality of arithmetic and geometric means, we have
    \begin{align}
        \sum^T_{t=1} \|w^{t+1} - w^t\|^2 & \geq \frac{1}{T}(\| \sum^T_{t=1} \|w^{t+1}(w^{t+1} - w^{t}) )\notag \\  
        & = \frac{1}{T} \|w^{T+1} - w^{1}\|^2 
    \end{align}
    According to (\ref{convex-mu}). we have 
    \begin{align}
        f(\hat{w}(T))-f(w^I)\leq \frac{1}{T}\sum^T_{t=1}[f(w^{t+1}) - f(w^I)]
    \end{align}
    where $\hat{w}(T) := \frac{1}{T}\sum^T_{t=1}w^{t+1}$, which means the upper bound of loss function can be represent by 
    \begin{align}
        f(\hat{w}(T))-f(w^I)\leq\frac{B^2}{2\eta T}+\frac{L'}{2T^2}\|w^{T+1}-w^1\|^2\label{sum(f(w^t+1) - f(w^I))}
    \end{align}  
    Since $\nabla f_i(w^*_i)=0$ has been proved in (\ref{optimal}), we have
    \begin{align}
        \|w^t-w^*_i\|^2 \leq \frac{1}{\mu}[f_i(w^t)-f_i(w^*_i)] 
    \end{align}  
    So $\|w^{t}-w^I\|^2$ can be limited by following elements
    \begin{align}
        \|w^t-w^I\|^2& =  \sum_{i \in \mathcal{I}_{ID}}a_i \|w^t-w\|^2 \notag \\
        &\leq \sum_{i \in \mathcal{I}_{ID}}2a_i(\|w^t-w^*_i\|^2+\|w^I-w^*_i\|^2) \notag \\
        &\leq \sum_{i \in \mathcal{I}_{ID}}2a_i(\frac{2}{\mu'}f_i(w^t)-\frac{2}{\mu'}f_i(w^*_i)+\|w^I-w^*_i\|^2) 
    \end{align}
    Hence, $\|w^{T+1}-w^1\|^2$ can be bounded as
    \begin{align}
        \|w^{T+1}-w^1\|^2 \leq \leq \frac{4L'B^2}{\mu'}+(4+\frac{4L'}{\mu'})\phi^2
    \end{align}
    Plugging (\ref{update}) in (\ref{sum(f(w^t+1) - f(w^I))}), we have 
    \begin{align}
        \sum^T_{t=1}[f(w^{t+1}) - f(w^I)] &\leq\frac{B^2}{2\eta}+\frac{L'}{2T}(\frac{4L'B^2}{\mu'}+(4+\frac{4L'}{\mu'})\phi^2) \notag \\
        & = \frac{B^2}{2\eta}+\frac{2L'}{T\mu'}[L'B^2+(\mu'+L')\phi^2]
    \end{align}
    According to Theorem 2 (\ref{convex-mu}), we have 
    \begin{align}
        f(\hat{w}(T))-f(w^*) &\leq\frac{1}{T}\sum^T_{t=1}[f(w^{t+1}) - f(w^I)] \notag \\
        & = \frac{B^2}{2\eta T}+\frac{2L'}{T^2\mu'}[L'B^2+(\mu'+L')\phi^2]
    \end{align}    
\end{proof}
Therefore, after incorporating the invariance penalty term, the global optimum point shifts. In the federated framework, the model continues to converge at the same rate to the unique invariance optimum point.
\section{Simulation Results}
\newcolumntype{Y}{>{\centering\arraybackslash}X}

\begin{figure*}[htbp]
  \centering
  % 第一行图片
  \begin{minipage}[b]{0.3\textwidth}
    \includegraphics[width=\textwidth]{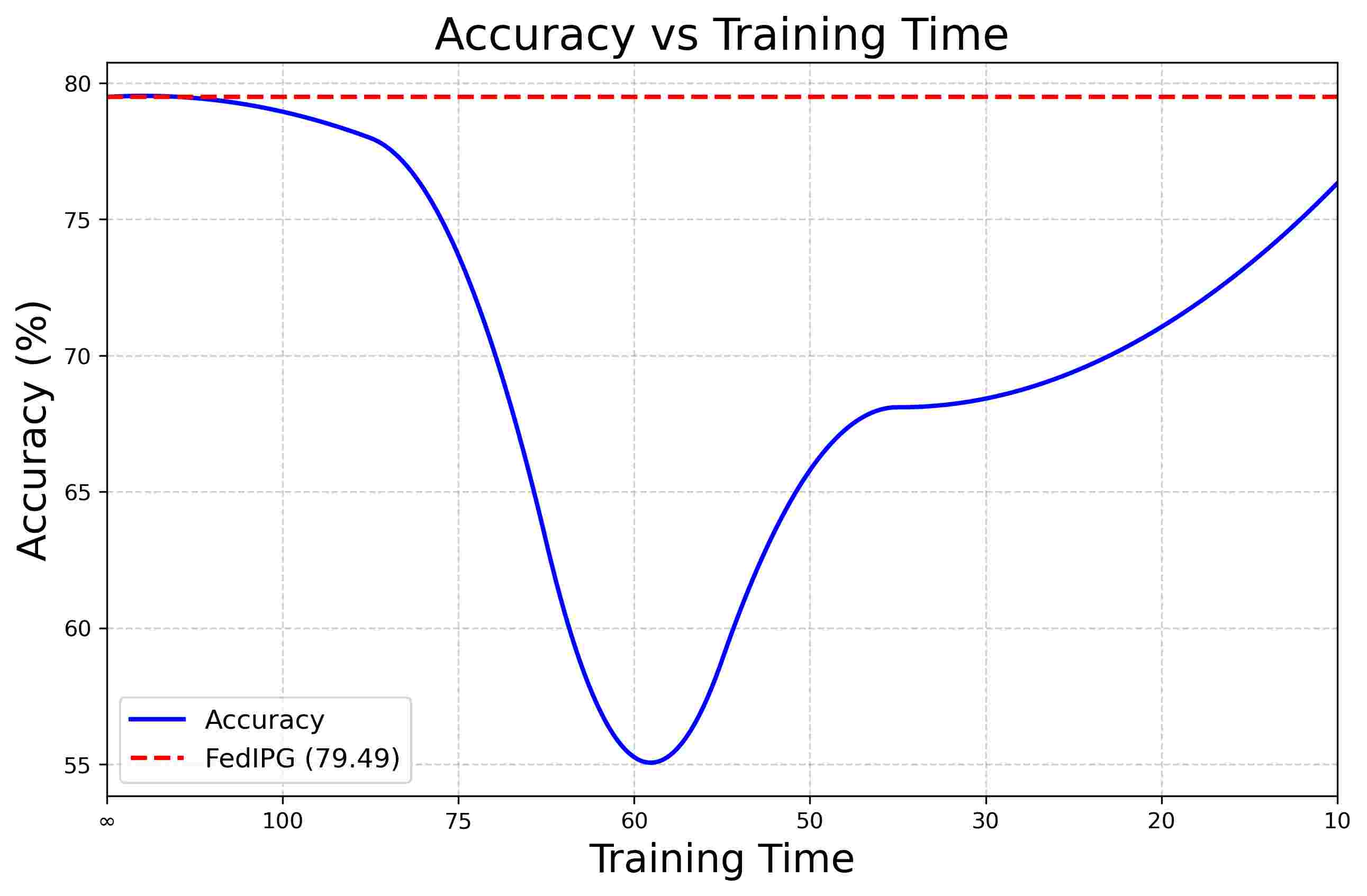}
    % \caption{acc1}
  \end{minipage}
  \hfill
  \begin{minipage}[b]{0.3\textwidth}
    \includegraphics[width=\textwidth]{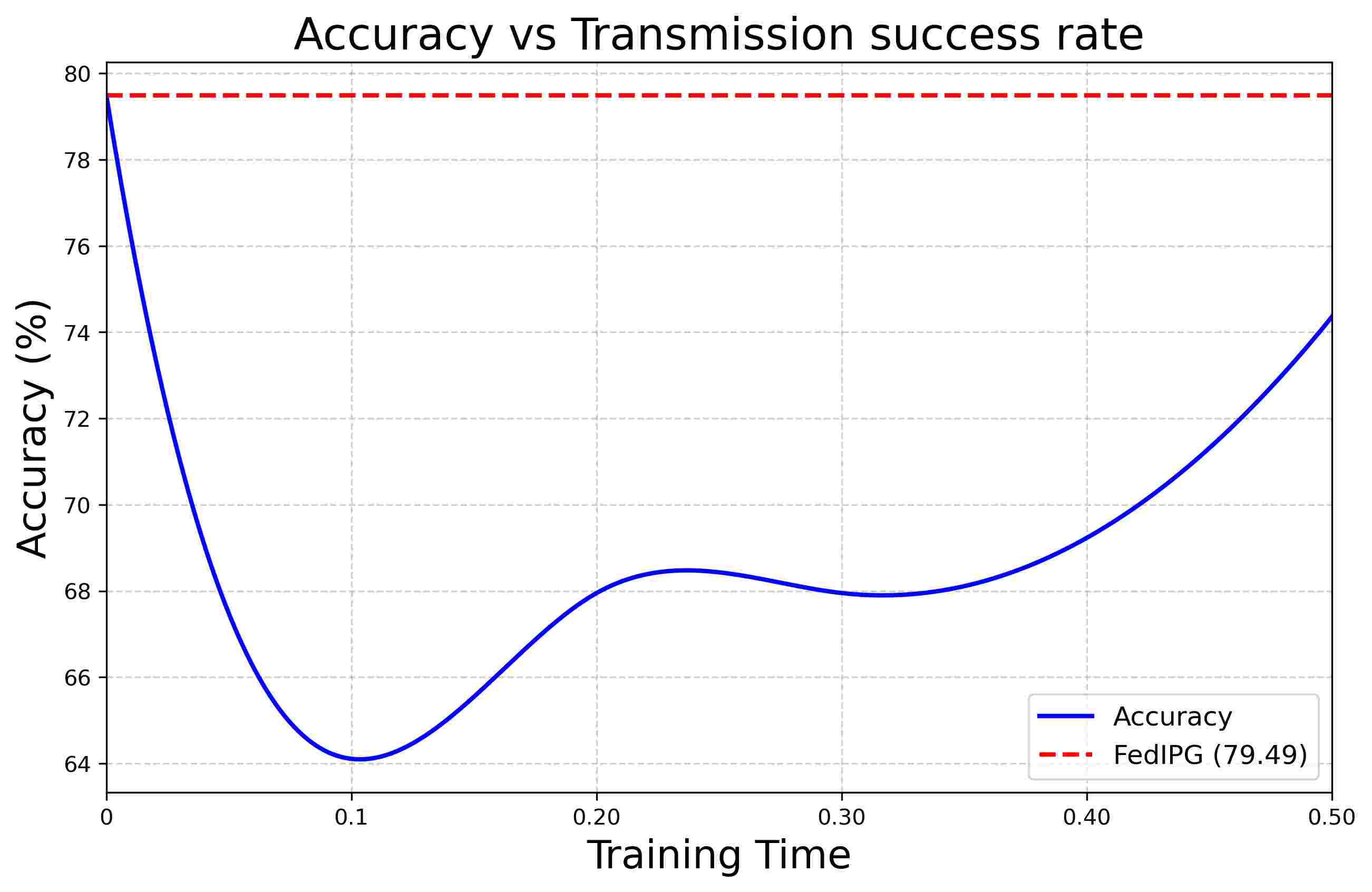}
    % \caption{acc2}
  \end{minipage}
  \hfill
  \begin{minipage}[b]{0.3\textwidth}
    \includegraphics[width=\textwidth]{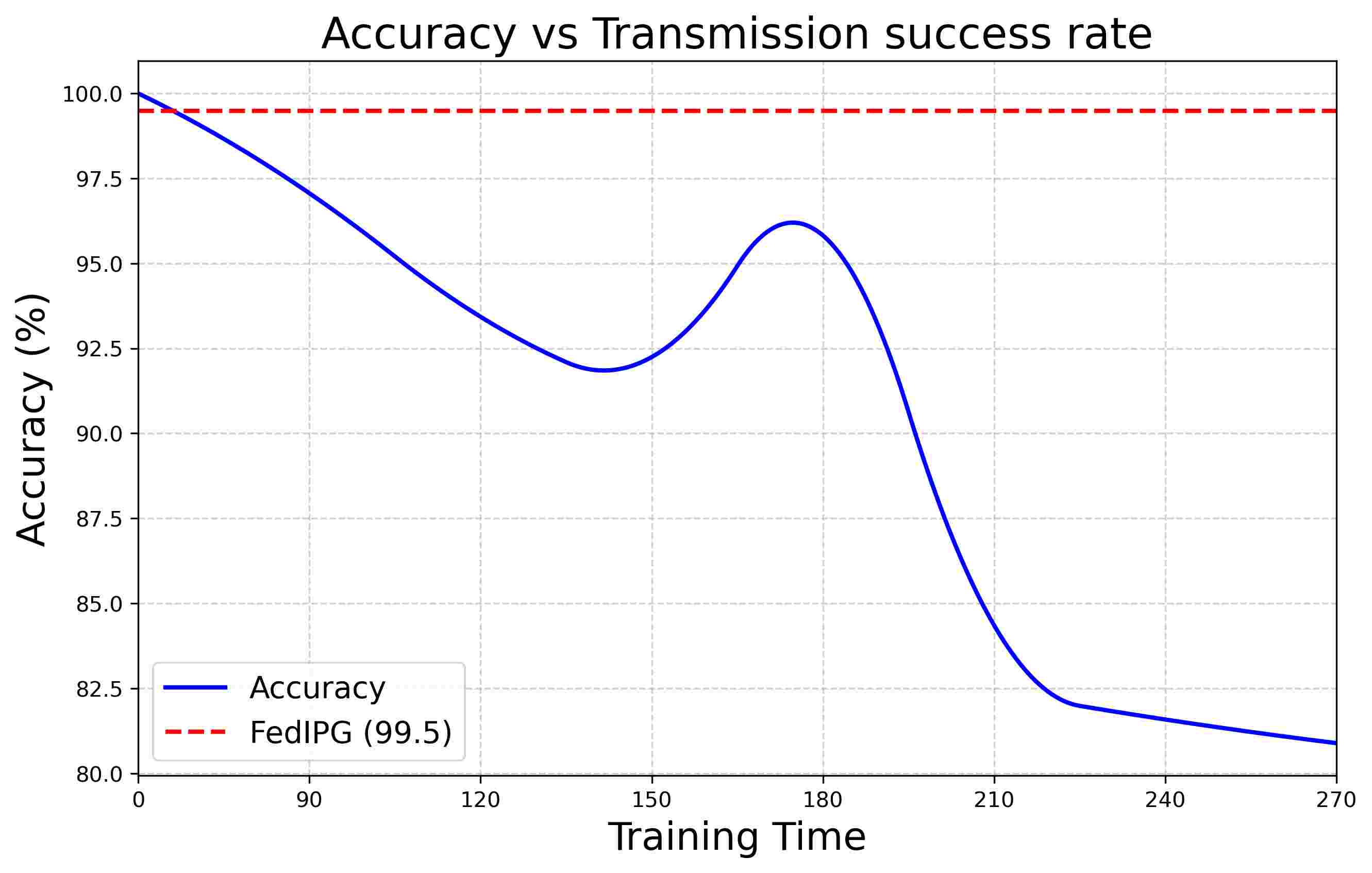}
    % \caption{acc3}
  \end{minipage}

  % 第二行图片
  \begin{minipage}[b]{0.3\textwidth}
    \includegraphics[width=\textwidth]{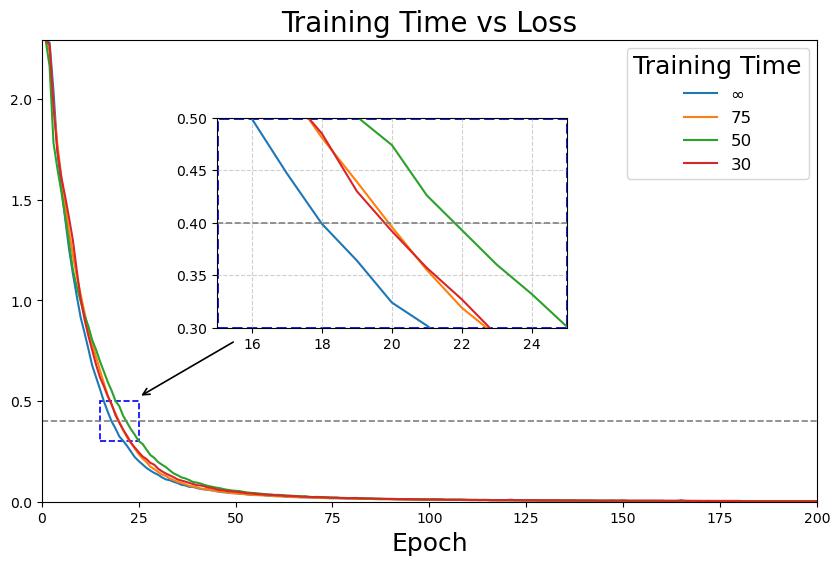}
    % \caption{loss4}
  \end{minipage}
  \hfill
  \begin{minipage}[b]{0.3\textwidth}
    \includegraphics[width=\textwidth]
    {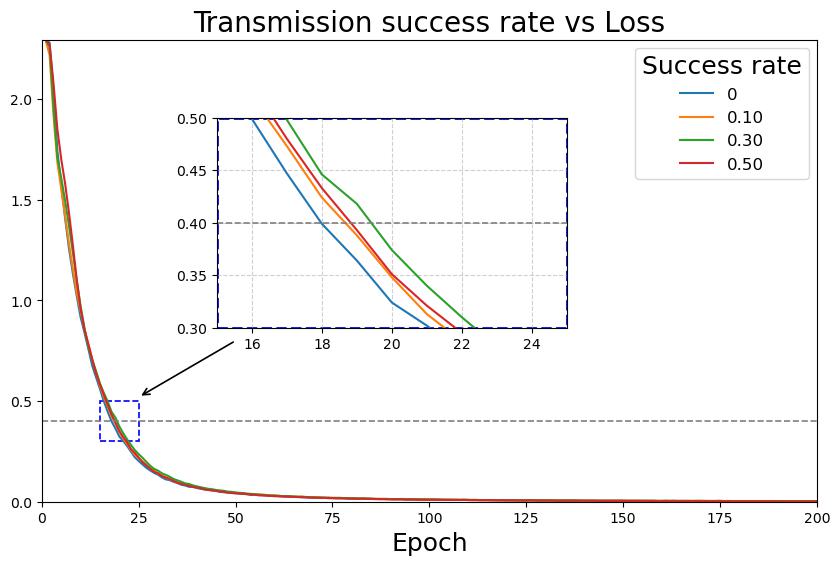}
    % \caption{loss5}
  \end{minipage}
  \hfill
  \begin{minipage}[b]{0.3\textwidth}
    \includegraphics[width=\textwidth]{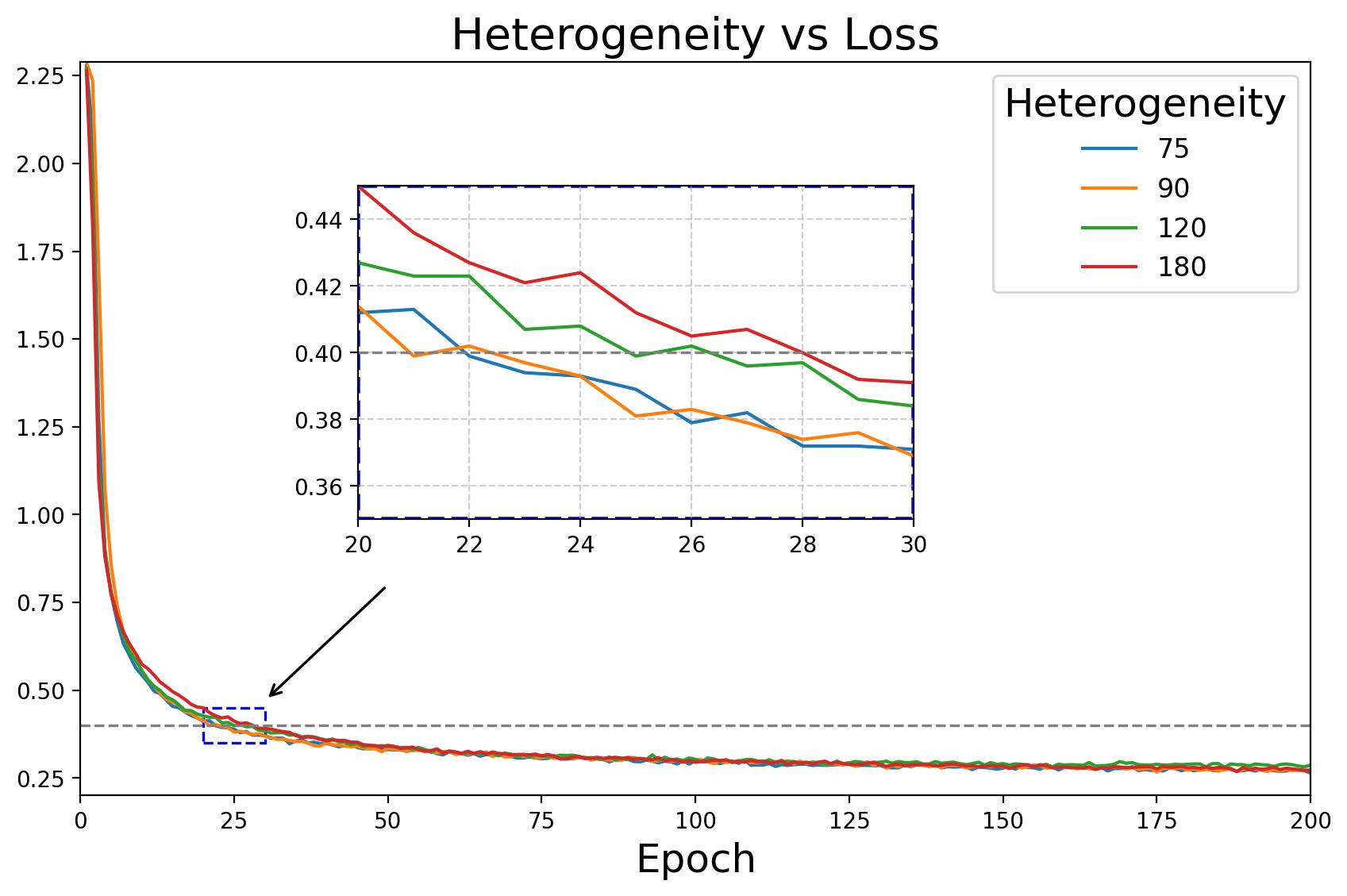}
    % \caption{loss6}
  \end{minipage}
  \caption{Trend of accuracy and loss function changes as the degree of anomaly changes}
\label{exitresult}
\end{figure*}
In this chapter, we will discuss the performance of the invariant federated learning algorithm proposed in this paper in edge intelligence scenarios in three parts. Firstly, we will discuss the impact of heterogeneous and asynchronous clients, and demonstrate that the proposed algorithm can provide basic performance guarantees for non-participating clients. Secondly, we will compare the generalization ability of our algorithm with the current state-of-the-art algorithms. Finally, we will test the generalization effect of the proposed algorithm in mixed environments, where models with anti-confounding capabilities can be considered to have preliminary causality.

\subsection{Exit Strategy for Aberrant Clients }

\begin{table*}[htbp]
\caption{Average test accuracy using leave-one-out domain validation in the scenario with different number of clients.}
\centering
\renewcommand{\arraystretch}{1.5}
\begin{tabularx}{\textwidth}{|c|Y|Y|Y|Y|Y|Y|}
\hline
\diagbox{Algorithms / $\%$}{Datasets-clients} & RotateMNIST- 5& RotateMNIST-50  & PACS-5 & PACS-50 & VLCS-5 & VLCS-50 \\
\hline
FedAvg & $94.32\pm 0.09$ & $90.82\pm 0.57$  & 81.33 $\pm$ 0.01& 65.91 $\pm$ 0.02 &76.15 $\pm $ 0.01 & 70.12  $\pm$ 0.01\\
\hline
FedIIR & $95.33\pm 0.14$ & $94.20\pm 0.40$  & 81.23 $\pm$ 0.01& 68.41 $\pm$ 0.02  &76.47 $\pm $ 0.01 & 74.50  $\pm$ 0.01\\
\hline
FedIPG (ours) & \textbf{96.40} $\pm$ 0.20 & \textbf{95.68} $\pm$ 0.43 &\textbf{81.75} $\pm$ 0.01 &\textbf{69.34} $\pm$ 0.01  &\textbf{76.53} $\pm $ 0.01 & \textbf{75.27} $\pm$ 0.01\\
\hline
\end{tabularx}

\label{looresult}
\end{table*}

In this experiment, we aim to empirically prove that the exit strategy proposed in the theoretical analysis is beneficial to the global system. The preceding theoretical analysis has demonstrated that the participation of clients with delays and those exhibiting strong heterogeneity can impede the aggregation process. The global setup comprises 20 clients, among which one is designated as an aberrant client. Three distinct anomalous scenarios are considered, with the accuracy on the training set serving as a metric for aggregation efficacy, and the reduction of loss to a relatively low value as an indicator of aggregation speed. 

In the first scenario, the anomalous client requires more than one round to compute new local parameters. Utilizing a Non-iid CIFAR-10 dataset and a CNN model, we investigate the impact of the anomalous client's training duration on aggregation. The second scenario involves an anomalous client that successfully receives and transmits parameters with a certain probability. Employing the same dataset and model configuration as the first scenario, we examine the effect of transmission success rate on aggregation. Considering that the impact of the first and second types of anomalous clients on the loss function becomes less observable in the later stages of training, it is assumed that all clients participate in at least one round of aggregation at the beginning. The third scenario features an anomalous client with pronounced heterogeneity, constructed by rotating images from a handwritten digit dataset, with a CNN model employed. Stable participating clients are subjected to rotations of 0, 15, 30, 45, 60, and 75 degrees, while the anomalous client's degree of heterogeneity is considered stronger the greater the deviation from these angles. We assess the influence of heterogeneity intensity on aggregation performance.

In all three experiments, training without the anomalous client serves as the benchmark, against which the impact of the anomalous conditions is compared. From the analysis in figure \ref{exitresult}, it can be observed that the participation of anomalous clients in training affects the aggregation speed and final performance of the model to varying degrees. Compared to scenarios where such clients do not participate at all, ``slow'' clients degrade the model's performance on the training set. Clients with extremely poor training capabilities, due to their infrequent participation, have a relatively minor detrimental effect on performance. As the training capability of clients improves, which means the number of rounds required for training decreases, the performance initially worsens significantly before slowly recovering, aligning with the theoretical analysis. At its worst, the performance degradation exceeds 35$\%$. Additionally, slow clients significantly reduce the aggregation speed. Similar to the first anomalous scenario, the degradation caused by limited communication environments is pronounced, showing an initial decline followed by a rise, although under our current parameter settings, a recovery to benchmark accuracy levels has not been observed. In the third anomalous scenario, as the degree of anomaly increases, the final model accuracy significantly decreases, with a maximum degradation of up to 20$\%$. Unlike the first two scenarios, a slight recovery in accuracy is noted when digits are rotated by 180 degrees, which we tentatively attribute to the fact that multiple 180-degree rotations align with the original image orientation. In terms of convergence speed, when the heterogeneity of anomalous clients is minimal, there is no significant hindrance to aggregation. However, as the degree of heterogeneity increases, its impact on aggregation speed becomes markedly evident. Furthermore, the red dashed line in the accuracy graph represents the performance of our proposed FedIPG algorithm on participating training clients. When anomalous clients do not participate in training, FedIPG does not exhibit significant performance degradation compared to traditional algorithms.

\begin{figure}[!t]
\centering
\includegraphics[width=3in]{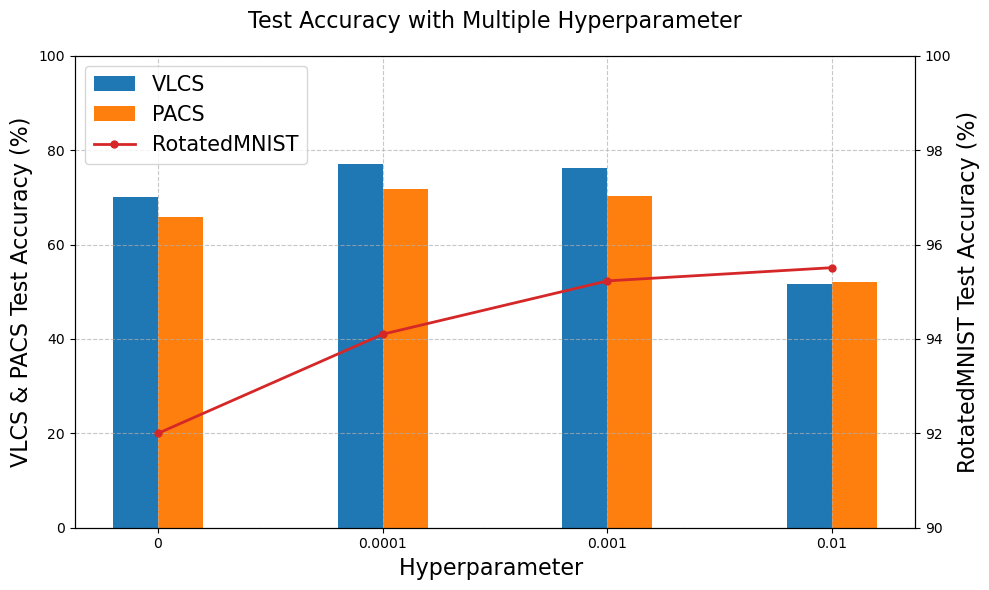}
\captionsetup{font={small}}
\caption{Average test accuracy with different hyperparameter $\lambda$.}
\label{hyperparameter}
\end{figure} 

\begin{figure}[!t]
\centering
\includegraphics[width=3.2in]{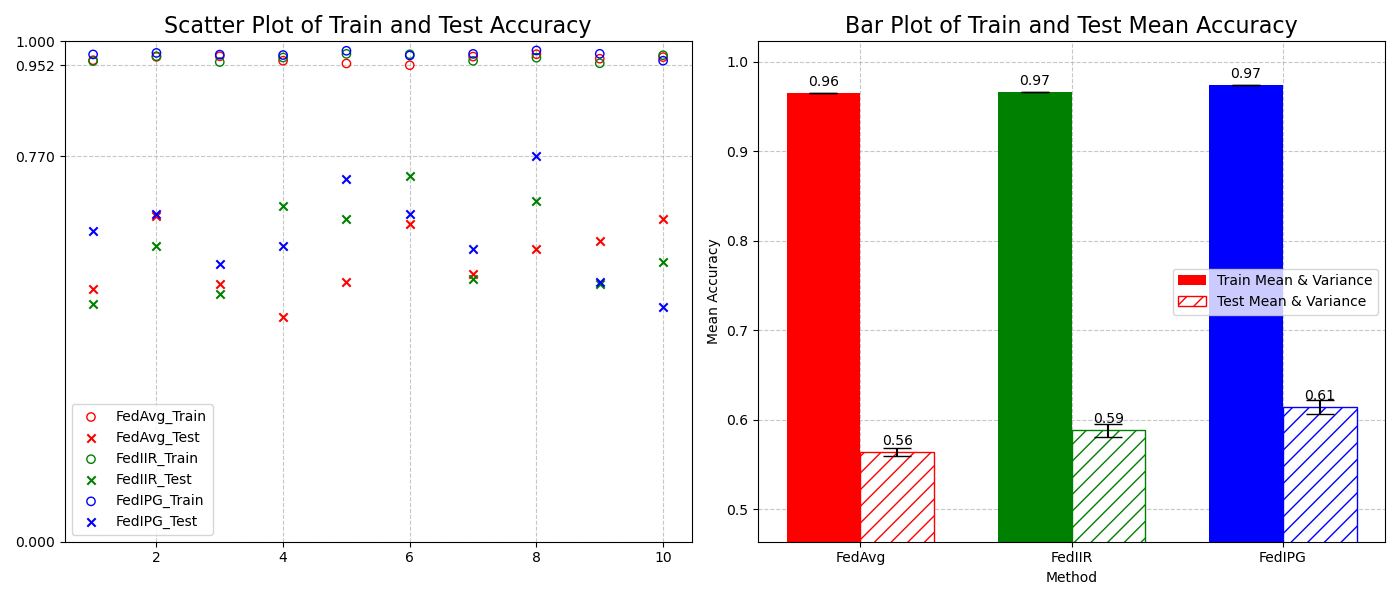}
\captionsetup{font={small}}
\caption{Train and test accuracy using causal validation with different quality of confounding in CNN network.}
\label{cmnist_cnn}
\end{figure} 
In summary, we have demonstrated that anomalous clients adversely affect both the final performance and the learning speed of model training. Therefore, provided that sufficient data volume can be ensured, an exit strategy is viable to safeguard the performance and learning speed for the majority of clients. Next, we will ensure through the FedIPG algorithm that exiting clients can achieve satisfactory performance, even in the presence of heterogeneity. 
% We design an extreme scenario where there is a client with extremely poor conditions, subject to three types of constraints. The first constraint is that the data distribution of this client differs significantly from that of other clients. The second constraint is that the client has extremely poor computational resources with the central server, only being able to return data after multiple communication rounds. The third constraint is strong data heterogeneity and asynchrony. We compare the global aggregation effects under fixed communication rounds when these three types of datasets participate or do not participate in training, which are measured by the training loss of the clients and the accuracy on the test sets of their own distributions. We use a three-layer convolutional neural network with 21,840 parameters as the benchmark network and the MNIST dataset for testing. Although our theoretical analysis requires the loss function of the network to be convex, we relax this restriction in the experimental network to verify that our theory still holds in more general cases. The figure below compares the performance of three abnormal clients and traditional federated learning under different numbers of clients.

\subsection{Leave-One-Domain-Out Generalization}
\begin{figure*}[!t]
\centering
\includegraphics[width=7in]{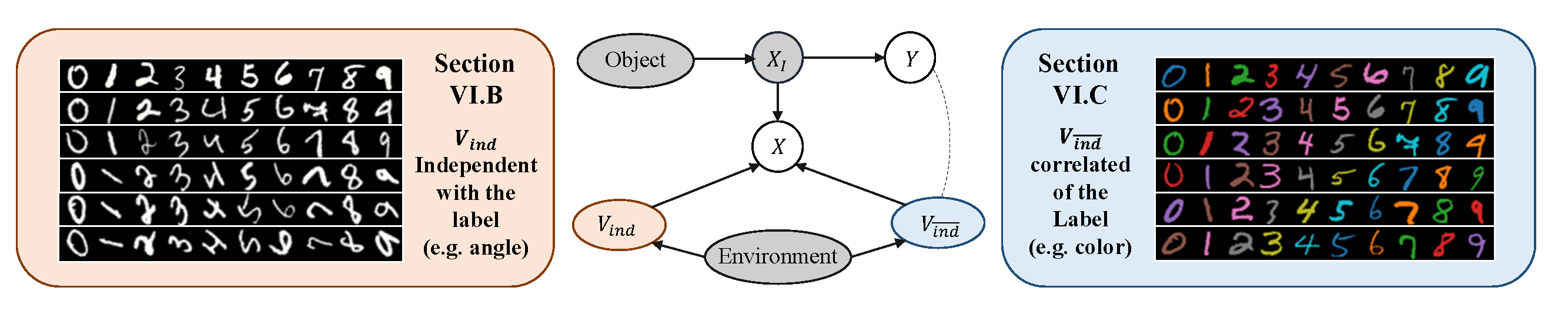}
\captionsetup{font={small}}
\caption{Two different experimental designs verify the generalization ability of the model when the environment variables are label dependent and independent.}
\label{experiment}
\end{figure*}

\begin{figure*}[htbp]
  \centering
 
  % 第一行图片
  \begin{minipage}[b]{0.3\textwidth}
    \includegraphics[width=\textwidth]{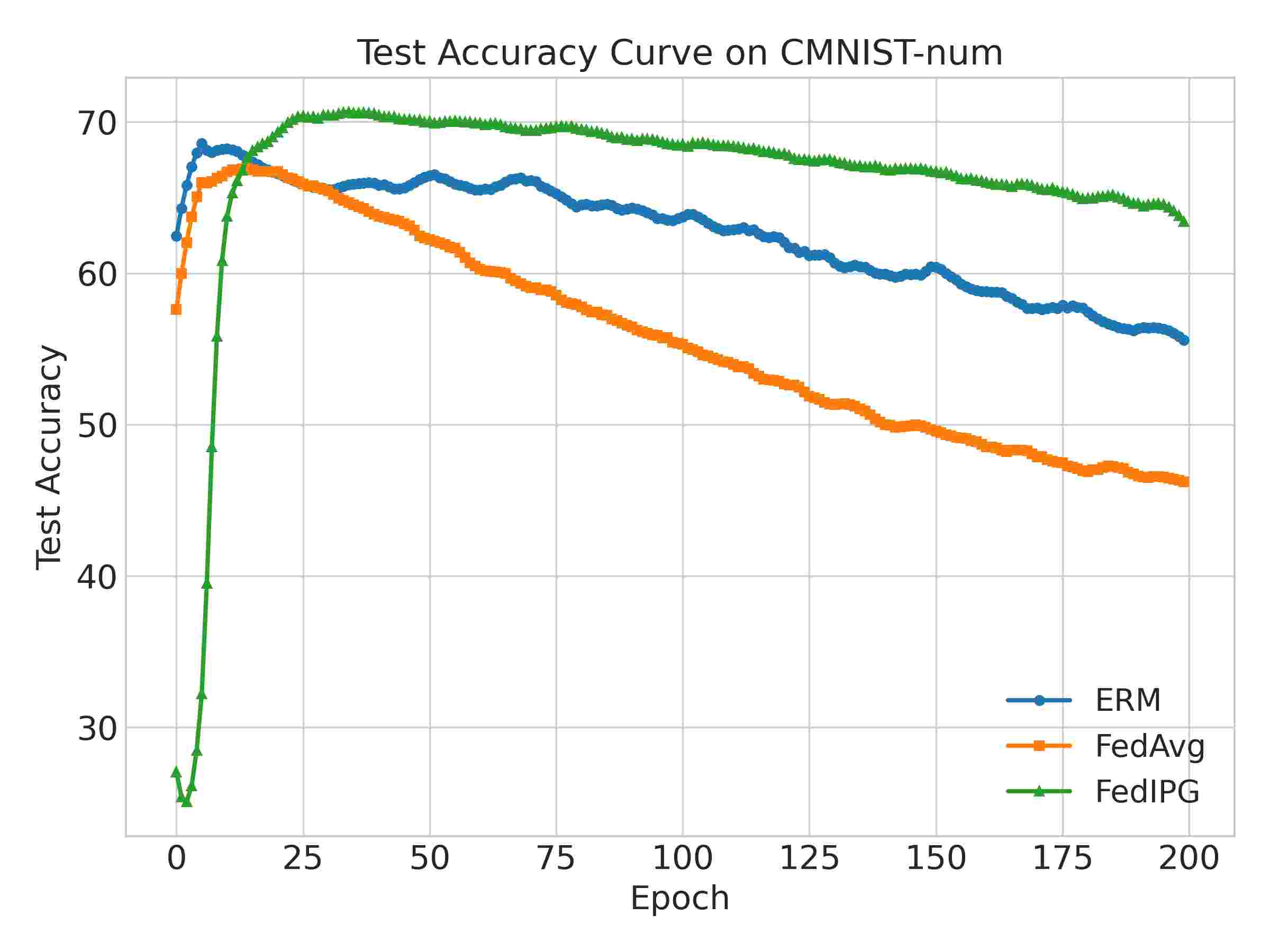}
    % \caption{acc1}
  \end{minipage}
  \hfill
  \begin{minipage}[b]{0.3\textwidth}
    \includegraphics[width=\textwidth]{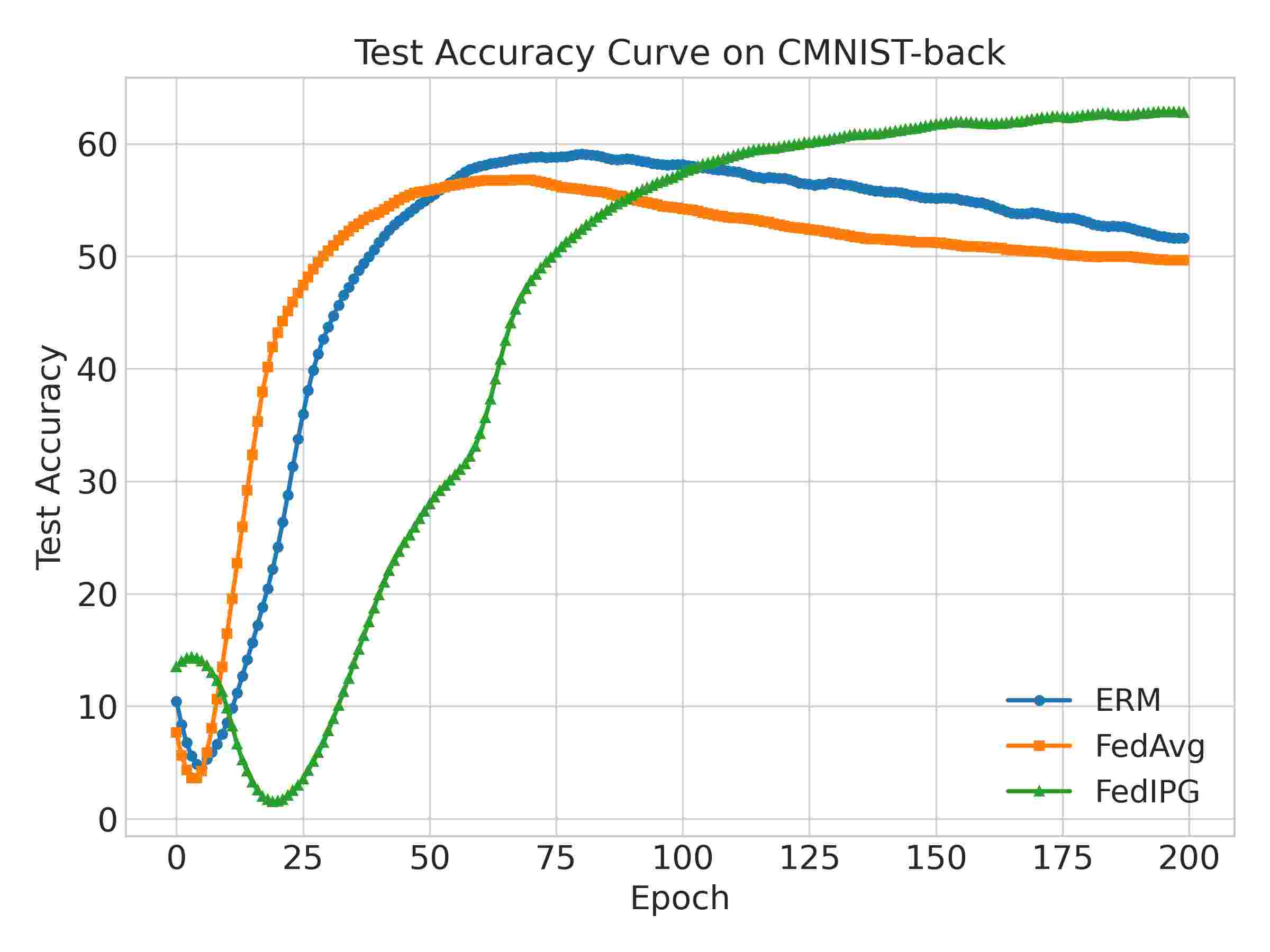}
    % \caption{acc2}
  \end{minipage}
  \hfill
  \begin{minipage}[b]{0.3\textwidth}
    \includegraphics[width=\textwidth]{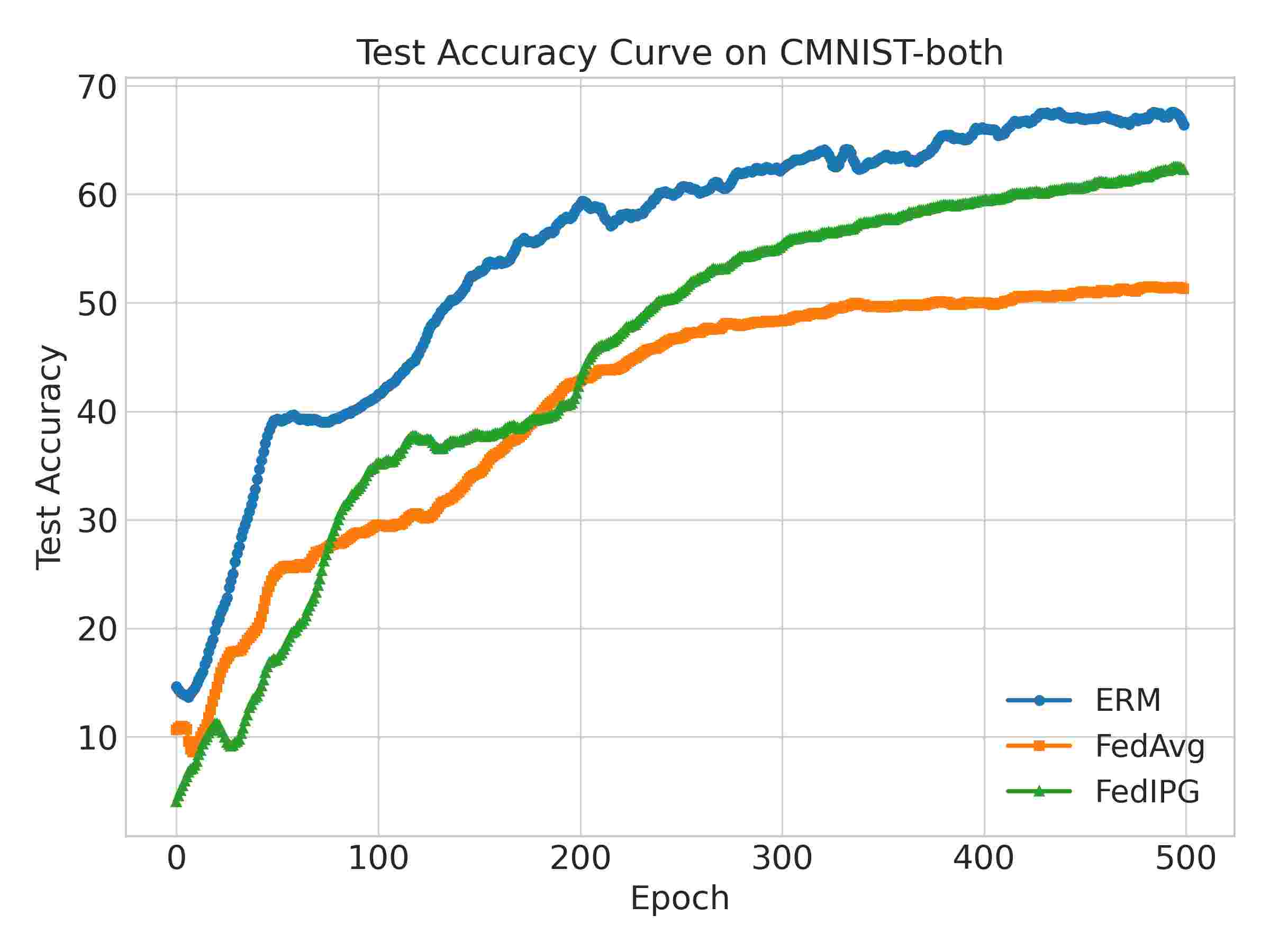}
    % \caption{acc3}
  \end{minipage}

  \caption{The accuracy curves of three algorithms based on inference in data with different quality of confounding.}
\label{cmnist_mlp_curve}

\end{figure*}

In our experiments on domain generalization and OOD generalization, we adopt a “leave one domain out” design. The dataset is divided into several domains based on categories, with one domain withheld during training and used solely for OOD testing (see Fig. \ref{experiment}). For instance, the PACS dataset comprises four style-based domains (art painting, cartoon, photo, sketch), each containing seven classes (dog, elephant, guitar, giraffe, horse, house, person). The model is trained on three domains to build a seven-class classifier, which is then evaluated on the held-out domain.

We validated our approach using three datasets: RotateMNIST, PLCS, and VLCS. Different network architectures were used per dataset: AlexNet for RotateMNIST and ResNet for VLCS and PACS. Two comparative algorithms were employed: FedAvg and FedIIR (the state-of-the-art federated algorithm for OOD generalization). Notably, FedIPG requires only half the communication rounds of FedIIR, making it more robust in environments with unstable communication.

To simulate varying scales of edge intelligence, we designed experiments with 5 and 50 clients, each holding data from a single domain. As the number of clients increases, the data per client decreases, which necessitates stronger aggregation. We allocate 90$\%$ of each client's data for training and the remaining 10% for validation (classification prediction).

Table \ref{looresult} shows that FedIPG outperforms all baselines across datasets and client scales. For simpler tasks with minimal OOD discrepancies, invariance processing yields modest improvements over conventional federated learning methods, albeit with increased variance. In contrast, for more complex tasks, invariance-based methods achieve nearly a 5$\%$ performance gain over traditional algorithms, and they outperform current state-of-the-art approaches, particularly in scenarios with many clients. Thus, FedIPG enhances OOD generalization without incurring extra communication overhead.

In our theoretical analysis, the hyperparameter governing the trade-off between the penalty term and the loss function is constrained within a limited range. We experimentally assessed its impact on training across three datasets with 50 clients. As shown in Fig. \ref{hyperparameter}, when the hyperparameter is within (0.0001, 0.01), FedIPG significantly improves performance. However, for complex datasets and networks, larger hyperparameter values degrade performance noticeably, as illustrated by the bar chart. Simpler tasks and models exhibit performance gains over a broader hyperparameter range, as seen in the line chart. Consequently, for more complex tasks, it is advisable to select a smaller hyperparameter value to avoid degradation.

\subsection{Anti-Confounding Causal Generalization }
\begin{table}[!t]
  \centering
  \caption{Average test accuracy using causal validation with different quality of confounding in MLP network.}

  \renewcommand{\arraystretch}{1.5} % 调整行高
  \begin{tabular}{|c|c|c|c|}
    \hline
    \diagbox{Algorithms}{Stain} & Foreground & Background & Both \\
    \hline
    ERM & $55.78$ \tiny{$\pm 2.61$} & $52.79$ \tiny{$\pm 1.31$} & \textbf{66.46} \tiny{$\pm 4.07$} \\
    \hline
    FedAvg & $45.73$ \tiny{$\pm 1.51$} & $49.71$ \tiny{$\pm 1.23$} & $52.78$ \tiny{$\pm 4.16$} \\
    \hline
    FedIPG(ours) & \textbf{63.99} \tiny{$\pm$ 1.48} & \textbf{62.77} \tiny{$\pm$ 1.15} & 62.08 \tiny{$\pm$ 4.241} \\
    \hline
  \end{tabular}
\label{cmnist_mlp}
\end{table}
In this subsection, Algorithms are tested for learning nonlinear invariant predictors with a synthetic classification task derived from CMNIST. The goal is to predict a label assigned to each image based on the digit. Meanwhile, MNIST images are gray-scale, and the images' foreground, background, or both are colored with ten different colors that correlate strongly with the class label. The correlation between clients and the test set is spurious, as shown in Fig \ref{experiment}.

In the CNN network, a comparison is made with traditional federated learning algorithms and the FedIIR algorithm. The network structures of the three algorithms are identical to those used in the previous RotateMNIST experiment. We fix ten color and digit combinations, and in each experiment, we randomly allocate nine of these to 50 clients as the training set, while the remaining one serves as the test set post-training. As shown in Fig \ref{cmnist_cnn}, the average performance of FedIPG across ten tests is improved by $5\%$ compared to traditional algorithms, and it still shows a $2\%$ improvement over the current best out-of-distribution generalization model. Moreover, in the ten experiments where each of the ten combinations is used as the test set, FedIPG achieves the highest test accuracy five times, which is the highest among all three algorithms. This is a preliminary validation that, in a leave-one-domain-out test structure with added causally related confounding factors, invariance-based algorithms, especially FedIPG, maintain a significant advantage.

Further, a comparison is made in the MLP network with centralized algorithms and traditional federated learning algorithms. The foreground staining area is less than $50\%$, the background staining area is more than $50\%$, and images stained both in the foreground and background have a staining area of $100\%$. This increase in the staining area presents the interference strength of confounding factors. We set the target accuracy at $60\%$, requiring five clients and 600 samples per client for foreground staining, five clients and 2000 samples per client for background staining, and twenty clients with 2000 samples per client for foreground and background. As heterogeneity increases, the FedIPG algorithm, based on invariance penalty terms, can maintain anti-confounding capabilities by increasing the number of samples and clients. When the quality of the confounding features is not severe, conventional algorithms perform worse than the proposed, centralized or distributed algorithm, which is recorded in Table \ref{cmnist_mlp} and Fig. \ref{cmnist_mlp_curve}. This demonstrates that FedIPG has a specific capability for causal identification and anti-confounding, even though it does not explicitly model causal relationships and causal features. However, when it is significantly confounding, the performance of FedIPG, although still much better than FedAvg, is inferior to that of centralized algorithms. This indicates that FedIPG's causal inference capability remains relatively weak, and its performance does not match that of centralizing the data.

The results clearly illustrate that in scenarios characterized by significant heterogeneity among environments, ERM and FedAvg may not perform as expected due to the reversal of correlation direction in the test environment. FedIPG stands out by effectively extracting nonlinear invariant predictors across multiple environments within centralized and distributed frameworks. Most notably, the final accuracy of all four OOD tests underscores the reliable generalization capability of FedIPG. 

\section{Conclusion}
In this paper, we propose a novel approach to address the challenges of heterogeneity and asynchrony in federated learning. To ensure the aggregation effectiveness and speed for the majority of clients, it is advisable to allow aberrabt clients to exit the training process, thereby mitigating heterogeneity and system latency. This strategy is complemented by a model with out-of-distribution generalization capabilities, which is named FedIPG, ensuring baseline performance for the exiting clients. Our system achieves better overall performance for both training clients and out-of-distribution clients. Additionally, we have discovered that, with the support of invariance, FedIPG exhibits preliminary causal properties. Exploring how to leverage more advanced causal relationships within federated systems to enhance generalization represents a potential direction for future research.
\bibliographystyle{IEEEtran}
\bibliography{IEEEabrv,Ref.bib}

\end{document}